\documentclass{article} 
\usepackage{preprint,times}

\usepackage{amsmath,amsfonts,bm}
\usepackage{booktabs}
\usepackage{bbm}
\usepackage{graphicx}
\usepackage{subcaption}
\usepackage{algorithm}
\usepackage{algpseudocode}
\usepackage{acronym}
\usepackage{mathtools}
\usepackage{amsthm}
\usepackage{wrapfig}
\usepackage{caption}
\usepackage{float}
\usepackage{tikz}
\usetikzlibrary{arrows.meta,decorations.pathreplacing}

\newtheorem{proposition}{Proposition}

\newcommand{\gradstrut}{\vphantom{\sum_{i=0}^{k-1}\frac{z_j-\lambda}{\lambda(1-\lambda)}}}
\acrodef{RL}{reinforcement learning}
\acrodef{MPC}{model predictive control}
\acrodef{BC}{behavior cloning}
\acrodef{PPO}{proximal policy optimization}
\acrodef{MDP}{Markov decision process}
\acrodef{POMDP}{partially observable Markov decision process}
\acrodef{IL}{imitation learning}
\acrodef{IRL}{inverse reinforcement learning}
\acrodef{DAgger}{dataset aggregation}
\acrodef{SAC}{soft actor-critic}
\acrodef{TD3}{twin delayed deep deterministic policy gradient}
\acrodef{DDPG}{deep deterministic policy gradient}
\acrodef{TRPO}{trust region policy optimization}
\acrodef{GAE}{generalized advantage estimation}
\acrodef{DQN}{deep Q-network}
\acrodef{A2C}{advantage actor-critic}
\acrodef{DPO}{direct preference optimization}
\acrodef{RLHF}{reinforcement learning from human feedback}
\acrodef{MBRL}{model-based reinforcement learning}
\acrodef{MFRL}{model-free reinforcement learning}
\acrodef{DRL}{deep reinforcement learning}
\acrodef{MPPI}{model predictive path integral}
\acrodef{sim2real}{simulation-to-reality}

\acrodef{OCP}{optimal control problem}
\acrodef{NMPC}{nonlinear model predictive control}
\acrodef{LMPC}{linear model predictive control}
\acrodef{RMPC}{robust model predictive control}
\acrodef{SMPC}{stochastic model predictive control}
\acrodef{LQR}{linear quadratic regulator}
\acrodef{iLQR}{iterative linear quadratic regulator}
\acrodef{DDP}{differential dynamic programming}
\acrodef{DP}{dynamic programming}
\acrodef{ADP}{approximate dynamic programming}
\acrodef{NLP}{nonlinear program}
\acrodef{QP}{quadratic program}
\acrodef{SQP}{sequential quadratic programming}
\acrodef{IP}{interior point}
\acrodef{RTI}{real-time iteration}
\acrodef{KKT}{Karush-Kuhn-Tucker}
\acrodef{ODE}{ordinary differential equation}
\acrodef{RK4}{fourth-order Runge-Kutta}
\acrodef{PID}{proportional-integral-derivative}
\acrodef{CBF}{control barrier function}
\acrodef{CLF}{control Lyapunov function}
\acrodef{HJB}{Hamilton-Jacobi-Bellman}

\acrodef{NN}{neural network}
\acrodef{MLP}{multilayer perceptron}
\acrodef{DNN}{deep neural network}
\acrodef{RNN}{recurrent neural network}
\acrodef{LSTM}{long short-term memory}
\acrodef{GP}{Gaussian process}
\acrodef{VAE}{variational autoencoder}
\acrodef{SGD}{stochastic gradient descent}
\acrodef{MSE}{mean squared error}
\acrodef{KL}{Kullback-Leibler}
\acrodef{ML}{machine learning}
\acrodef{AI}{artificial intelligence}

\acrodef{JSRL}{Jump-Start Reinforcement Learning}

\newcommand{\cS}{\mathcal{S}}
\newcommand{\cA}{\mathcal{A}}

\DeclareMathOperator*{\Ex}{\mathbb{E}}

\def\eqref#1{equation~\ref{#1}}

\def\1{\bm{1}}

\DeclareMathAlphabet{\mathsfit}{\encodingdefault}{\sfdefault}{m}{sl}
\SetMathAlphabet{\mathsfit}{bold}{\encodingdefault}{\sfdefault}{bx}{n}

\def\gR{{\mathcal{R}}}

\newcommand{\E}{\mathbb{E}}

\newcommand{\R}{\mathbb{R}}

\newcommand{\parag}[1]{\vspace{-3mm}\paragraph{#1}}

\newcommand{\piE}{\hat{\pi}}
\newcommand{\pinn}{\pi_\theta}
\newif\ifdraft
\draftfalse

\ifdraft
\newcommand{\PF}[1]{{\color{red}{\bf PF: #1}}}

\newcommand{\MX}[1]{{\color{blue}{\bf MX: #1}}}

\newcommand{\DA}[1]{{\color{teal}\textbf{DA: #1}}}

\else
\newcommand{\PF}[1]{}

\newcommand{\MX}[1]{}

\newcommand{\DA}[1]{}

\fi

\usepackage{hyperref}
\usepackage{url}

\title{Adaptive Expert Guidance for Efficient On-Policy Reinforcement Learning}

\author{%
Daniele Affinita$^{1}$ \quad
Ming Xu$^{1}$ \quad
Rudolf Reiter$^{2}$ \quad
Davide Scaramuzza$^{2}$ \quad
Pascal Fua$^{1}$ \\[0.5ex]
{\normalfont $^{1}$EPFL, Switzerland \qquad $^{2}$University of Zurich, Switzerland}
}

\begin{document}

\maketitle


\begin{abstract}
With massively parallel simulation, on-policy \ac{RL} methods such as \ac{PPO} have become standard in many domains. However, learning from scratch is sample-inefficient and fails to exploit the potential existence of a suboptimal expert, such as a heuristic, a model-based controller, or a policy trained on a related task.
Such an expert is often available and can guide early training, but its sub-optimality limits final performance. The challenge then becomes balancing expert guidance against learning from rewards.
Existing methods set the expert's influence through a blending weight, a schedule, or an evaluation-driven curriculum. Alternatively, they adapt it with additional learned components such as critics over expert actions or auxiliary agents. However, none optimizes it using the same on-policy objective as the policy itself.
We propose a method in which the learner and the expert alternate control within each training episode, and the expert's share of control is a single learnable parameter optimized jointly with the policy. The learner benefits from the expert early in training, but its share of control declines as the learner becomes more competent, until eventually vanishing completely. This leaves the learner acting alone and better than the suboptimal expert.  
We evaluate our method on 34 tasks across two benchmarks, spanning discrete and continuous action spaces, using both learned and model-based experts. Our method improves sample efficiency over guided and unguided baselines while requiring minimal hyperparameter variation. The expert's share decays to zero as the learner improves, vanishing when the expert is no longer useful.
\end{abstract}

\acresetall


\section{Introduction}

\Ac{RL} has demonstrated remarkable ability to solve complex sequential decision-making problems~\citep{silver2018alphazero, andrychowicz2020learning, kaufmann2023champion, guo2025deepseek}, but it requires many environment interactions, as the agent must explore the usually high-dimensional state space before it can discover and exploit high-reward behaviors. GPU-parallel simulation~\citep{makoviychuk2021isaac} favors on-policy methods such as \ac{PPO}~\citep{schulman2017proximal}, because a single GPU can step thousands of environments in parallel and cut the wall-clock cost of collecting experience. Off-policy and model-based methods need fewer interactions~\citep{haarnoja2018soft, hansen2024tdmpc2} but require more computation per interaction. Their higher per-interaction cost therefore limits the wall-clock gains they can obtain from massively parallel simulation. In this paper, we therefore focus on on-policy methods.

In practice, training does not need to start from scratch~\citep{agarwal2022reincarnating} when an \emph{expert} is available, that is, an algorithm that delivers a useful, though not necessarily optimal, policy. Examples include controllers and planners built on approximate physical models~\citep{carius2020mpcnet}, hand-engineered policies~\citep{uchendu2023jumpstart}, previously trained agents~\citep{rusu2016policy, schmitt2018kickstarting}, and agents trained on a related task~\citep{taylor2009transfer, zhu2023transfer}. Imitating the expert's actions limits the learner to the expert's quality~\citep{ross2011dagger, carius2020mpcnet}. Other methods keep the expert in the control loop, and therefore in the deployed system, through residual control~\citep{silver2018residual, johannink2019residual, jeon2025residualmpc}, uncertainty-weighted fusion of expert and learner actions~\citep{rana2020multiplicative, rana2023bayesian}, \ac{MPC} as an optimization layer~\citep{reiter2023hilepp, romero2025acmpc, adabag2025diffmpcgpu}, or \ac{MPC} whose value estimates are blended with a learned value function~\citep{bhardwaj2021blending}.

Since the expert is suboptimal, training should exploit its guidance early and then hand control to the learner, so that the learner can improve beyond the expert and act alone after training. Most existing approaches set the expert's influence independently of the learning objective. Expert-regularized \ac{RL} decays the imitation weight on a schedule~\citep{rajeswaran2018dexterous} or tunes it by population-based training~\citep{schmitt2018kickstarting}. When the expert contributes experience during training, its share is shortened on an evaluation-driven curriculum~\citep{uchendu2023jumpstart}, decayed on a schedule~\citep{yang2023reinforcement}, or kept at a fixed ratio of pre-collected expert traces to learner samples~\citep{yan2025learning}. Methods that adapt the expert's role to its value rely on a learned selector or a critic over the expert's actions~\citep{nair2018overcoming, xie2018learning, kurenkov2019ac, zhang2023policyexpansion}, or on a counterfactual agent that learns from rewards alone~\citep{shenfeld2023tgrl}.

We instead propose a guided policy-gradient algorithm in which the expert and learner share control during training, with the expert's share represented by a single parameter optimized under the same return objective as the policy. We show that its gradient corresponds to the \emph{expert advantage}: the expected gain in return from letting the expert act instead of the learner, from the same states. Naively learning this share is ineffective: switching policies at every step makes the comparison too noisy, while optimizing return alone does not necessarily withdraw an expert. We therefore commit to one policy for blocks of $k$ steps, yielding a more reliable comparison, and introduce a price on expert use, making withdrawal part of the objective. An imitation term on expert blocks further trains the learner whenever the expert acts. Once the learner catches up, the expert's share decreases to zero and training reduces to standard policy gradient.

%
The proposed algorithm is evaluated on 34 tasks across two common benchmarks spanning discrete and continuous action spaces, with minimal hyperparameter variation. The experts, learned or model-based, range in quality from near-optimal to substantially worse than the converged learner. Our method improves sample efficiency over guided and unguided baselines, and the learner asymptotically recovers the performance of  Proximal Policy Optimization (\ac{PPO}) or better, so the expert's suboptimality does not limit the final policy. The rate of expert withdrawal adapts to expert quality, with weaker experts withdrawn faster.\PF{ And how do you compare to state-of-the-art methods?} 


\section{Related Work}

We group prior work by how the expert enters training: as a source of target actions (imitation), as an extra term in the objective (regularization), or as a policy that collects part of the rollouts. Each group must decide how strongly and for how long the expert should influence the learner, the question that our method answers with a learned expert share. Our method belongs to the third group, and the last paragraph relates its block structure to temporal abstraction.

\parag{Imitating an expert.}
Behavior cloning suffers from compounding errors off the demonstration distribution, which \ac{DAgger}~\citep{ross2011dagger} corrects by querying the expert on the states the learner visits, and a learner that regresses onto a fixed expert's actions inherits the expert's ceiling. Guided policy search~\citep{levine2013guided, levine2016endtoend} and expert iteration~\citep{anthony2017thinking, silver2018alphazero} avoid that ceiling with a teacher that improves during training, whereas our expert is fixed and imitation acts only as a regularizer that vanishes once the expert is withdrawn. Other methods surpass a fixed expert by combining imitation with reward-driven updates: LOKI~\citep{cheng2018fast} imitates for a randomized number of iterations before switching to policy gradient, while MAMBA~\citep{cheng2020policy} and RPI~\citep{liu2024rpi} imitate black-box oracles state-wise, wherever an oracle's estimated value exceeds the learner's. RPI is the closest of these to our setting, but its handover compares value functions learned from each oracle's rollouts, whereas we use the learner's own critic on the training rollouts.

\parag{Expert-regularised reinforcement learning.}
A second family adds an expert-derived term to the reward objective: a large-margin~\citep{hester2018dqfd} or behavior-cloning~\citep{nair2018overcoming} loss on demonstrations, an imitation term in the policy gradient~\citep{rajeswaran2018dexterous}, or distillation from a teacher on the student's rollouts~\citep{schmitt2018kickstarting, agarwal2022reincarnating}. The learner can surpass the expert, so the design question is the regularizer weight. The Q-filter of \citet{nair2018overcoming} applies imitation only where the critic prefers the demonstrated action, and TGRL~\citep{shenfeld2023tgrl} adapts the weight by comparing the guided agent with a reward-only counterfactual agent.

\parag{Experts that act during training.}
Several methods let the expert control part of the rollout and shrink that part over training: a prefix of each episode shortened by a curriculum in \ac{JSRL}~\citep{uchendu2023jumpstart}, or a probability of acting decayed on a schedule~\citep{yang2023reinforcement}. Others arbitrate between expert and learner with a learned selector or critic, at every step~\citep{xie2018learning, zhang2023policyexpansion} or with commitment until a Bayesian critic favors switching~\citep{kurenkov2019ac}. In our method, blocks have fixed length and the switch ignores the state, so the share gradient reduces to a comparison of expert and learner block advantages, which we estimate from the training rollouts with the learner's own critic. No selector, critic over expert actions, evaluation phase, or auxiliary agent is needed, and the share follows the same objective as the learner, so withdrawal is part of it.

\parag{Temporal abstraction.}
Our method can be connected to the options framework~\citep{sutton1999between}, in which an agent chooses among temporally extended actions. Each of our blocks can be seen as the execution of one of two such options, the expert or the learner, which last a fixed number of steps and of which only the learner is trained. Under this view, our learner update corresponds to the intra-option policy gradient of option-critic~\citep{bacon2017option}, which learns a set of options that together solve the task, whereas we learn to withdraw one of ours.


\section{Method}
\label{sec:method}

\begin{figure}[t]
    \centering
    \includegraphics[width=1.0\linewidth]{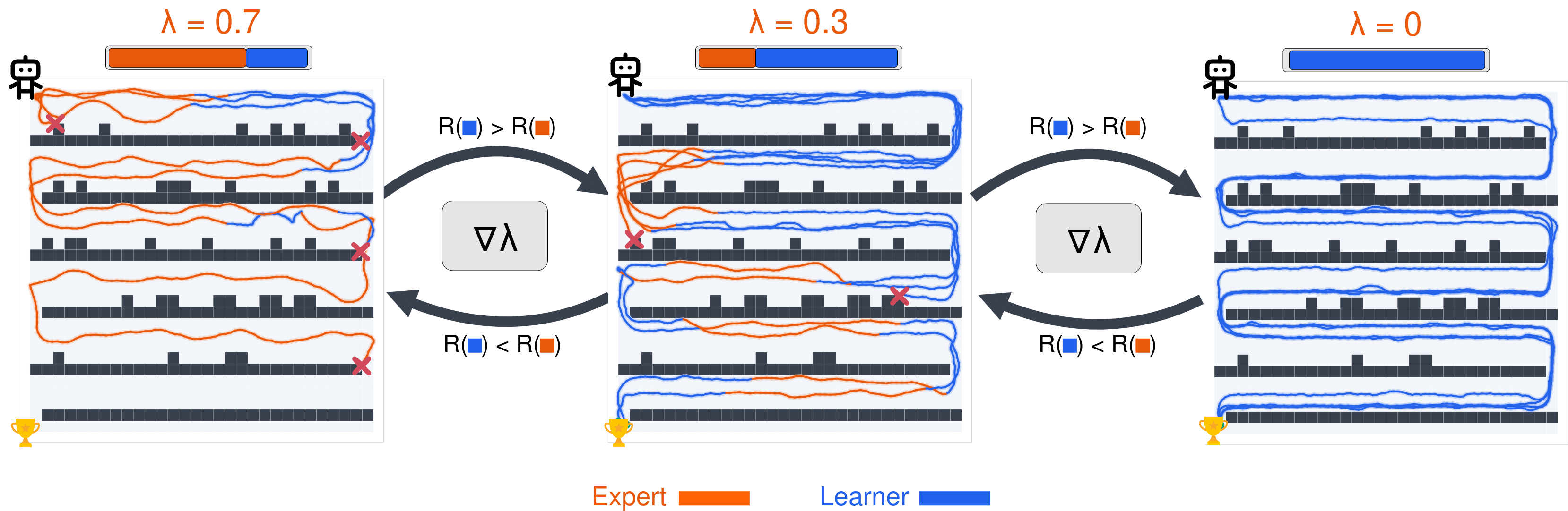}
    \caption{\textbf{The expert share is learned from block-level performance.}
The expert (orange) and learner (blue) alternate control in temporally committed blocks, with $\lambda$ determining the expert's share.
The difference between expert- and learner-block advantages drives the update of $\lambda$: as the learner improves, the expert is progressively withdrawn until $\lambda=0$.}
    \label{fig:teaser}
    \vspace{-1.3em}
\end{figure}

Our approach belongs to the policy gradient family of algorithms, with the learner's parameters optimized from on-policy rollouts. The key difference is that the expert and the learner take turns controlling the agent during these rollouts. As shown in Fig.~\ref{fig:teaser}, we split each training episode into blocks of a fixed number of steps; at the start of each block, a random draw decides whether the expert or the learner acts for the entire block. We refer to the probability of selecting the expert for a given block as the \emph{expert share}, which is automatically adjusted as training proceeds. It starts high, so that the expert's blocks can bring the agent to states the learner is unlikely to reach on its own. For instance, in tasks with sparse rewards, these are states where rewards appear and generate the non-zero gradients the learner can benefit from. As the learner becomes more proficient, the expert share progressively diminishes, eventually vanishing altogether.

This block structure is what makes the automated adjustment of the expert share informative. Because each block is executed entirely by either the learner or the expert, we can attribute its outcome to one of the two, and compare how well the expert's blocks and the learner's blocks perform within the same episode. We update the expert share with the same policy-gradient rule that trains the policy network weights. These updates have an intuitive interpretation: they raise the share when the expert's blocks outperform the learner's and lower it once the learner catches up. The expert thus hands over control as the learner improves, without requiring a predetermined schedule. Blocks are necessary because switching at every step would blur this comparison, since the outcome of a single action depends mostly on the actions that follow it. Two additional mechanisms complete the method. First, we impose a cost on every block played by the expert, so that the expert retains control only while it is clearly better than the learner. This is necessary because, without it, the expert would keep its share as long as the learner merely matches it, and the learner might never act alone. Second, on the expert's blocks, we train the learner to imitate the expert's actions, so that the experience the expert collects also contributes to training the learner; this ensures fuller use of each episode for learner training. Once the expert share reaches zero, the expert is no longer used. Training continues as in ordinary policy gradient, and the final learner acts entirely on its own.
Sect.~\ref{sec:mixture} formalizes the problem. Sect.~\ref{sec:gradients} derives and interprets the gradients for the learner and the expert share. Sect.~\ref{sec:price-imitation} adds the cost and the imitation term.

\subsection{Problem Formulation}
\label{sec:mixture}

Consider a \ac{MDP} $(\cS, \cA, P, r, \rho_0, \gamma)$ with state space $\cS \subseteq \R^n$, action space $\cA \subseteq \R^m$, transition kernel $P(\cdot \mid s, a)$, reward $r \colon \cS \times \cA \to \R$, initial state distribution $\rho_0$, and discount $\gamma \in [0,1)$. Let $\piE$ be a fixed \emph{expert} policy and $\pinn$ a \emph{learner} policy with parameters $\theta \in \R^d$. The objective of learning is to maximize the expected discounted return under the learner, i.e., $\Ex_{\tau\sim \pinn}\big[{\textstyle\sum_{t\ge0}}\gamma^t r_t\big]$.

We partition each episode into blocks of $k$ steps and denote $b_j=jk$ for the start time of block $j$. At the beginning of each block, a switch variable $z_j\sim\operatorname{Bernoulli}(\lambda)$, where $\lambda$ is the expert share, determines whether the expert or the learner rolls out that block. The switches are independent across blocks and independent of the states. Formally, for $z_j\in\{0, 1\}$, the rollout within block $j$ is given by
\begin{equation}
  a_{b_j+i}\sim\pi_{z_j}(\cdot\mid s_{b_j+i}), \quad s_{b_j+i+1} \sim P(\cdot \mid s_{b_j+i}, a_{b_j+i}),
  \quad i\in\{0,\ldots,k-1\},
  \label{eq:commit}
\end{equation}
where $\pi_1\coloneqq \piE$ and $\pi_0 \coloneqq \pinn$. We constrain $\lambda\in[0,\lambda_{\max}]$ with $\lambda_{\max}<1$, so that the learner has a nonzero probability of acting in every block. We refer to the policy yielding these interleaved rollouts as the \emph{mixture policy}. Fig.~\ref{fig:blocks} illustrates the block structure. Unless indicated otherwise, expectations are over the joint distribution of the switches $\bm z=(z_0,z_1,\ldots)$ and the trajectories $\tau=(s_0,a_0,s_1,a_1,\ldots)$ they induce, with $s_0\sim\rho_0$.
\definecolor{expertC}{HTML}{EA580C}
\definecolor{learnerC}{HTML}{2261EB}

\begin{figure}[t]
\centering
\begin{tikzpicture}[
  font=\footnotesize,
  st/.style   ={circle,draw=black!55,fill=white,inner sep=0pt,minimum size=3.5pt},
  bst/.style  ={circle,draw=black,fill=black,inner sep=0pt,minimum size=4.5pt},
  arr/.style  ={-{Stealth[length=3pt]},black!55,line width=0.5pt},
  Eband/.style={fill=expertC!12,draw=expertC,rounded corners=2pt,line width=0.6pt},
  Aband/.style={fill=learnerC!10,draw=learnerC,rounded corners=2pt,line width=0.6pt},
  Gband/.style={fill=black!6,draw=black!45,dashed,rounded corners=2pt,line width=0.6pt},
  brcup/.style={decorate,decoration={brace,amplitude=3pt,raise=1pt},
                black!55,line width=0.5pt},
  alab/.style ={black!55,font=\scriptsize,inner sep=1pt,above=0.5pt},
  slab/.style ={font=\scriptsize,inner sep=1pt,below=3pt},
  zlab/.style ={font=\scriptsize,inner sep=1.5pt,anchor=south east},
]

\draw[Eband] (-0.42,-0.45) rectangle (2.42,0.42);
\draw[Aband] ( 2.58,-0.45) rectangle (5.42,0.42);
\draw[Eband] ( 5.58,-0.45) rectangle (8.42,0.42);
\draw[Gband] ( 9.98,-0.45) rectangle (12.82,0.42);

\foreach \t in {1,2,4,5,7,8} \node[st]  (s\t) at (\t,0) {};
\foreach \t in {0,3,6}       \node[bst] (s\t) at (\t,0) {};
\foreach \t [evaluate=\t as \u using int(\t+1)] in {0,...,7}
  \draw[arr] (s\t) -- (s\u);
\draw[arr] (s8) -- (8.85,0);

\node[bst] (t0) at (10.4,0) {};
\node[st]  (t1) at (11.4,0) {};
\node[st]  (t2) at (12.4,0) {};
\node[bst] (t3) at (13.4,0) {};
\draw[arr] (9.55,0) -- (t0);
\foreach \t [evaluate=\t as \u using int(\t+1)] in {0,1,2}
  \draw[arr] (t\t) -- (t\u);
\node[black!55] at (9.2,0) {$\cdots$};

\node[alab] at (0.5,0)  {$a_{0}$};
\node[alab] at (2.5,0)  {$a_{k-1}$};
\node[alab] at (10.9,0) {$a_{b_j}$};
\node[alab] at (12.9,0) {$a_{b_j+k-1}$};

\draw[brcup] (-0.42,0.44) -- (2.42,0.44)
  node[midway,above=5pt,font=\scriptsize,inner sep=0pt] {$k$ steps};

\node[zlab,expertC]  at (2.42,-0.45)  {$z_0=1$};
\node[zlab,learnerC] at (5.42,-0.45)  {$z_1=0$};
\node[zlab,expertC]  at (8.42,-0.45)  {$z_2=1$};
\node[zlab,black!55] at (12.82,-0.45) {$z_j$};

\node[slab] at (s0) {$s_{0}$};
\node[slab] at (s3) {$s_{k}$};
\node[slab] at (s6) {$s_{2k}$};
\node[slab] at (t0) {$s_{b_j}$};
\node[slab] at (t3) {$s_{b_{j+1}}$};

\end{tikzpicture}
\caption{\textbf{Temporally committed mixture.} The horizon is split into blocks of $k$ steps, starting at $b_j=jk$ (filled states). At each boundary, a switch $z_j\sim\operatorname{Bernoulli}(\lambda)$ hands the whole block to the expert $\piE$ (orange, $z_j=1$) or to the learner $\pinn$ (blue, $z_j=0$).}
\label{fig:blocks}
\vspace{-1.3em}
\end{figure}
Denote the discounted reward of block $j$ by $\gR_j:=\sum_{i=0}^{k-1}\gamma^i r_{b_j+i}$ and the block discount factor by $\Gamma:=\gamma^k$, and note that the discounted return of an episode can be written with block rewards, $\sum_{t\ge0}\gamma^t r_t = \sum_{j\ge0}\Gamma^{\,j}\gR_j$. We optimize $\theta$ and $\lambda$ for the discounted return of the mixture policy,
\begin{equation}
  J(\lambda,\theta)=\E\Big[{\textstyle\sum_{j\ge0}}\Gamma^{\,j}\gR_j\Big],
  \label{eq:objective}
\end{equation}
which equals the learner's objective when $\lambda=0$. At block boundaries, the rollouts form an \ac{MDP} with state $s_{b_j}$, action $z_j$, reward $\gR_j$, and discount $\Gamma$.

\subsection{Policy gradient estimator under block rollouts}
\label{sec:gradients}

On the \ac{MDP} at block boundaries, define the block-wise switch-value function and value function
\begin{equation}
  Q(s,z):=\E\Big[\sum_{m\ge0}\Gamma^{\,m}\gR_{j+m}\;\Big|\;s_{b_j}=s,\ z_j=z\Big], \qquad V(s)=\lambda Q(s,1)+(1-\lambda)Q(s,0),
  \label{eq:value}
\end{equation}
where $V(s)$ is derived by taking the expectation of $Q(s,z)$ with respect to $z_j$. The block advantage $A(s,z):=Q(s,z)-V(s)$ compares committing the block at $s$ to policy $z$ with the average over the switch. We estimate it with the block temporal-difference residual $\widehat A_j:=\gR_j+\Gamma\,V(s_{b_{j+1}})-V(s_{b_j})$.

The gradients of the objective~\eqref{eq:objective} with respect to the learner parameters $\theta$ and the expert share $\lambda$ are given by

\noindent
\begin{minipage}{0.63\textwidth}
\small
\begin{equation}
\boxed{\gradstrut
\nabla_\theta J
=\E\big[\sum_{j\ge0}\Gamma^{\,j}(1-z_j)\widehat A_j
\sum_{i=0}^{k-1}\nabla_\theta\log\pinn(a_{b_j+i}\mid s_{b_j+i})\big]
}
\label{eq:theta-gradient}
\end{equation}
\end{minipage}%
\begin{minipage}{0.37\textwidth}
\small
\begin{equation}
\boxed{\gradstrut
\frac{\partial J}{\partial\lambda}
=\E\big[\sum_{j\ge0}\Gamma^{\,j}
\frac{z_j-\lambda}{\lambda(1-\lambda)}\widehat A_j\big]
}
\label{eq:lambda-gradient}
\end{equation}
\end{minipage}

Eq.~\ref{eq:theta-gradient} implies that the learner is updated only on the blocks it rolled out, with the block advantage in place of the per-step advantage, and Prop.~\ref{prop:contrast} below gives the $\lambda$-gradient a simple reading. When $\lambda=0$, we have $z_j=0$ for every block, and \eqref{eq:theta-gradient} reduces to the standard actor-critic gradient for $\pinn$ at block resolution. At this point, the block structure is unnecessary: no switching occurs and a single policy acts throughout, so training can revert to per-step updates with the per-step advantage. We now derive eq.~\ref{eq:theta-gradient} and \ref{eq:lambda-gradient}.

\paragraph{Derivation.} Write $\tau_j:=(a_{b_j},s_{b_j+1},\ldots,a_{b_{j+1}-1},s_{b_{j+1}})$ for the trajectory within block $j$, excluding the initial boundary state. The joint density of $\tau_j$ and the switch $z_j$, conditioned on the boundary state $s_{b_j}$ is given by
\begin{equation}
  p(z_j,\tau_j\mid s_{b_j})
  =\underbrace{\lambda^{z_j}(1-\lambda)^{1-z_j}}_{p(z_j)}
  \cdot\underbrace{\prod_{i=0}^{k-1}
    \pi_{z_j}(a_{b_j+i}\mid s_{b_j+i})\,
    P(s_{b_j+i+1}\mid s_{b_j+i},a_{b_j+i})}_{p(\tau_j \mid s_{b_j},z_j)},
  \label{eq:block-law}
\end{equation}
where the first factor does not depend on $s_{b_j}$ because the switch is independent of the state. The joint density of the trajectory $\tau$ and the switches $\bm z$ is then the block-wise product $p(\bm z,\tau)=\rho_0(s_0)\prod_{j\ge0}p(z_j,\tau_j\mid s_{b_j})$. Writing $\log\pi_{z_j}(a \mid s)=z_j\log\piE(a \mid s)+(1-z_j)\log\pinn(a \mid s)$ and noting that $\nabla_\theta\log\piE(a \mid s)=0$ for the expert for all $a, s$, it follows that the score for $\theta$ is
\begin{equation}
  \nabla_\theta \log p(\bm{z}, \tau) = \sum_{j\geq 0} \nabla_\theta\log p(z_j,\tau_j\mid s_{b_j}) = \sum_{j\geq 0} (1-z_j) \sum_{i=0}^{k-1}\nabla_\theta\log\pinn(a_{b_j+i}\mid s_{b_j+i}),
  \label{eq:learner-score}
\end{equation}
and the score for $\lambda$ is
\begin{equation}
    \frac{\partial\log p(\bm z,\tau)}{\partial\lambda}
    =\sum_{j\ge0}\frac{z_j-\lambda}{\lambda(1-\lambda)} .
    \label{eq:lambda-score}
\end{equation}
Neither score contains the expert's action probabilities, so the expert can be a black box that only returns actions. Writing $\nabla$ for either $\nabla_\theta$ or $\partial_\lambda$, the policy gradient theorem~\citep{sutton2000policy} yields $\nabla J=\E\big[\sum_{j\ge0}\Gamma^{\,j}\,\nabla\log p(z_j,\tau_j\mid s_{b_j})\,\widehat A_j\big]$, and substituting eq.~\ref{eq:learner-score} and \ref{eq:lambda-score} yields~\eqref{eq:theta-gradient} and~\eqref{eq:lambda-gradient}.

\paragraph{Reading the $\lambda$-gradient.} The $\lambda$-gradient in~\eqref{eq:lambda-gradient} has a simple reading once the switch is marginalized out. Let
\begin{equation}
  a_E(s):=\E\big[\widehat A_j\mid s_{b_j}=s,\ z_j=1\big],
  \qquad
  a_A(s):=\E\big[\widehat A_j\mid s_{b_j}=s,\ z_j=0\big]
  \label{eq:conditional-advantages}
\end{equation}
be the expected advantage of a block that starts in $s$ and is executed by the expert or by the learner.

\begin{proposition}
\label{prop:contrast}
Because the switch is independent of the boundary state,
\begin{equation}
  \frac{\partial J}{\partial\lambda}
  =\sum_{j\ge0}\Gamma^{\,j}\,
    \E_{s_{b_j}}\big[a_E(s_{b_j})-a_A(s_{b_j})\big],
  \label{eq:lambda-contrast}
\end{equation}
where the expectation is over the marginal distribution of the boundary states that the mixture visits.
\end{proposition}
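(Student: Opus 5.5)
Starting from the boxed $\lambda$-gradient \eqref{eq:lambda-gradient}, which we take as given, I will pull the expectation inside the sum over blocks and work with one block at a time. For fixed $j$, I condition on the boundary state $s_{b_j}$ using the tower property: $\E\big[\tfrac{z_j-\lambda}{\lambda(1-\lambda)}\widehat A_j\big]=\E_{s_{b_j}}\big[\E[\tfrac{z_j-\lambda}{\lambda(1-\lambda)}\widehat A_j\mid s_{b_j}]\big]$. The outer expectation is over the marginal law of $s_{b_j}$ under the mixture, which is exactly what the statement asks for.

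Next I evaluate the inner conditional expectation by further conditioning on $z_j$. The switch $z_j$ is drawn independently of $s_{b_j}$ and of the past, as shown by the factorization \eqref{eq:block-law}. Hence $P(z_j=1\mid s_{b_j})=\lambda$ and $P(z_j=0\mid s_{b_j})=1-\lambda$. Using the definitions \eqref{eq:conditional-advantages}, the inner expectation becomes
\[
\lambda\cdot\frac{1-\lambda}{\lambda(1-\lambda)}\,a_E(s)+(1-\lambda)\cdot\frac{-\lambda}{\lambda(1-\lambda)}\,a_A(s)=a_E(s)-a_A(s),
\]
with $s=s_{b_j}$. This is the key cancellation: the score weights $\tfrac{1-\lambda}{\lambda(1-\lambda)}=\tfrac1\lambda$ and $\tfrac{-\lambda}{\lambda(1-\lambda)}=-\tfrac1{1-\lambda}$ exactly undo the selection probabilities. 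Summing over $j$ with the weights $\Gamma^j$ then gives \eqref{eq:lambda-contrast}.

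The only delicate step is interchanging the infinite sum and the expectation, together with the conditioning step. I would justify the interchange by assuming bounded rewards and a bounded $V$, so that $|\widehat A_j|$ is uniformly bounded. The weight $\tfrac{z_j-\lambda}{\lambda(1-\lambda)}$ is bounded for $\lambda\in(0,\lambda_{\max}]$. The series then converges absolutely because $\Gamma<1$, and Fubini applies. The other point to state clearly is that $a_E$ and $a_A$ are conditional expectations given $s_{b_j}$ alone. This is legitimate because, by the Markov structure of the block MDP, $\widehat A_j$ depends on the history only through $s_{b_j}$, $z_j$, and the within-block randomness. Consequently $a_E$ and $a_A$ are well-defined functions of the state, and they do not depend on $j$. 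The case $\lambda=0$ should be excluded, or treated as a limit, since the score is undefined there.
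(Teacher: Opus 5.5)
Your proposal is correct and follows the same route as the paper's proof. Both condition on $s_{b_j}$, use the product law of $(s_{b_j},z_j)$ to average over the switch, and let the score weights $1/\lambda$ and $-1/(1-\lambda)$ cancel the selection probabilities to leave $a_E(s)-a_A(s)$; your remarks on interchanging the sum and expectation (bounded rewards, $\Gamma<1$) and on excluding $\lambda=0$ make explicit what the paper leaves implicit.
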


We prove Prop.~\ref{prop:contrast} in App.~\ref{app:contrast}. Because the switch ignores the state, the expert's blocks and the learner's blocks start from the same distribution of states, so~\eqref{eq:lambda-contrast} compares the two policies on equal footing: the gradient is positive when, on the states the mixture visits, a block executed by the expert earns more in expectation than one executed by the learner, and negative otherwise. Gradient ascent therefore raises $\lambda$ while the expert is ahead on average and lowers it once the learner is ahead. Since $a_E(s)=A(s,1)$ and $a_A(s)=A(s,0)$, the difference also equals $Q(s,1)-Q(s,0)$: the extra return from $s$ when the expert, not the learner, plays the next block.

\subsection{Penalty on expert use and imitation}
\label{sec:price-imitation}

\paragraph{Penalizing expert use.}
Without a penalty, $\lambda$ decreases only when the learner's blocks strictly outperform those of the expert. Once the learner matches the expert, which the imitation term below encourages, the gradient~\eqref{eq:lambda-gradient} vanishes and $\lambda$ remains at its current value, potentially preventing the learner from acting independently. We therefore maximize
\begin{equation}
  \Phi(\lambda,\theta)=J(\lambda,\theta)-c_E\,\sigma\,\lambda,\qquad c_E>0,
  \label{eq:priced-objective}
\end{equation}
where $\sigma$ is the sample standard deviation of the block advantage over a batch, treated as a constant in the gradient, and $c_E$ is the price of expert use in units of $\sigma$. Expressing the price in units of $\sigma$ avoids tuning it per task. The price does not depend on $\theta$, so it changes only the $\lambda$-gradient: the expert share increases only while the expert's blocks outperform the learner's on average by more than a margin proportional to $c_E\sigma$, and decreases once the margin falls below it.

\paragraph{Imitation on expert blocks.}
The factor $(1-z_j)$ in~\eqref{eq:theta-gradient} removes the expert's blocks from the learner update, so a fraction $\lambda$ of the collected experience does not train the learner. We use it through an imitation term on those blocks,
\begin{equation}
  \mathcal I(\theta)=\E\Big[\sum_{j\ge0}\Gamma^{\,j}\,z_j\sum_{i=0}^{k-1}\log\pinn(a_{b_j+i}\mid s_{b_j+i})\Big],
  \label{eq:imitation}
\end{equation}
weighted by $c_I\ge0$ and differentiated with the sampling distribution held fixed, as is standard for auxiliary losses. As in DAgger \cite{ross2011dagger}, imitation uses states the learner helps reach rather than fixed demonstrations. Imitating a mediocre expert can hold the learner back, so we set $c_I$ per benchmark (Sect.~\ref{sec:ablations}), and the term fades with the expert because it acts only on expert blocks. We do not differentiate $\mathcal I$ with respect to $\lambda$: doing so would let $\lambda$ rise to generate more blocks to imitate, whereas it should respond only to the block comparison and its price.

\paragraph{Resulting update.}
The learner ascends $J+c_I\,\mathcal I$ and the expert share ascends $\Phi$ so the complete gradients are:

\begin{equation}
\begin{aligned}
\nabla_\theta\big(J+c_I\,\mathcal I\big)
  &=\E\Big[\sum_{j\ge0}\Gamma^{\,j}\big[(1-z_j)\,\widehat A_j+c_I\,z_j\big]\sum_{i=0}^{k-1}\nabla_\theta\log\pinn(a_{b_j+i}\mid s_{b_j+i})\Big],\\
\frac{\partial\Phi}{\partial\lambda}
  &=\E\Big[\sum_{j\ge0}\Gamma^{\,j}\,\frac{z_j-\lambda}{\lambda(1-\lambda)}\,\widehat A_j\Big]-c_E\,\sigma,
\end{aligned}
\label{eq:priced-gradient}
\end{equation}
with $\lambda$ projected onto $[0,\lambda_{\max}]$ after each update. On learner blocks, the learner follows the policy gradient; on expert blocks, it imitates the expert. Once $\lambda=0$, no expert block is sampled, the imitation term vanishes, and training is standard policy gradient.


\section{Experiments}
\label{sec:experiments}

We address four questions.
\textbf{(Q1)} Does the learned handover yield higher returns with fewer environment steps than guided and unguided baselines? (Sect.~\ref{sec:results})
\textbf{(Q2)} Does $\lambda$ correlate well with the expert's declining superiority over the learner and eventually vanish, as \eqref{eq:priced-gradient} predicts? (Sect.~\ref{sec:mpc})
\textbf{(Q3)} Can a weak expert that often fails help and enable the learner to surpass it? (Sect.~\ref{sec:frozenlake})
\textbf{(Q4)} How much does each hyper-parameter, the block length $k$, the imitation weight $c_I$, and the expert price $c_E$, matter? (Sect.~\ref{sec:ablations})

\subsection{Experimental Setup}

\paragraph{Benchmarks.}
We evaluate on 34 tasks from two common benchmarks, both in the GPU-parallel implementations that motivate our on-policy setting. DMControl~\citep{tunyasuvunakool2020}, through MuJoCo Playground~\citep{zakka2025mujoco}, covers continuous control at varying difficulty, including sparse-reward tasks. MiniGrid~\citep{MinigridMiniworld23}, through NAVIX~\citep{pignatelli2026navix}, adds sparse-reward gridworlds with discrete actions. Each method trains for 20M environment steps on 5 seeds per task.
\vspace{-0.8em}

\paragraph{Baselines.}
All methods share the same \ac{PPO} learner, network, and optimizer settings (App.~\ref{sec:hyperparameters}). They differ only in how they use the expert. \emph{\ac{PPO}}~\citep{schulman2017proximal} is the unguided baseline. \emph{\acs{BC} $\rightarrow$ \acs{PPO}} follows \citet{ramrakhya2023pirlnav} in its simplest form: $M$ steps of \ac{BC}, then \ac{PPO}. \emph{Scheduled~$\lambda$} uses the same block structure and imitation component, but we modify the learned share with the schedule $\lambda(t)=\max(0,1-t/T)$, for $T\in\{1, 2.5, 5\}\times10^{6}$ steps as proposed by~\citet{yang2023reinforcement}. \emph{\ac{JSRL}}~\citep{uchendu2023jumpstart} lets the expert act for a prefix of each episode and shortens the prefix as the learner improves. \ac{JSRL} was designed for off-policy learning, so we exclude expert actions from the policy-gradient loss and keep them for value learning and \ac{BC}. \emph{Kickstarting}~\citep{schmitt2018kickstarting} adds a distillation loss toward the expert on the learner's rollouts and with a population-based adaptation.
\vspace{-0.8em}

\paragraph{Experts.}
Our target setting is an expert that comes from outside the training run, such as a controller, a planner, or a heuristic, available as a black box that returns actions. On MiniGrid, the expert is of this kind: a depth-first-search (DFS) planner with injected noise, tuned to reach about 70\% of the maximum return on average across tasks. On DMControl, handcrafting such an expert for each task is impractical, so we emulate it with learned experts: \ac{PPO} checkpoints stopped before convergence and TD-MPC2 policies~\citep{hansen2024tdmpc2}, which reach about 60\% of the best final return of any method on average. Our method never reads their parameters or resumes training from them; it only queries them for actions. App.~\ref{app:guides} reports per-task expert returns, and we will release all experts. Sects.~\ref{sec:mpc} and~\ref{sec:frozenlake} add two model-based experts that plan with inaccurate models: an \ac{MPC} controller with a perturbed dynamics model and a grid planner with a PD tracker.
\vspace{-0.8em}

\begin{figure}[t]
    \centering
    \includegraphics[width=\linewidth]{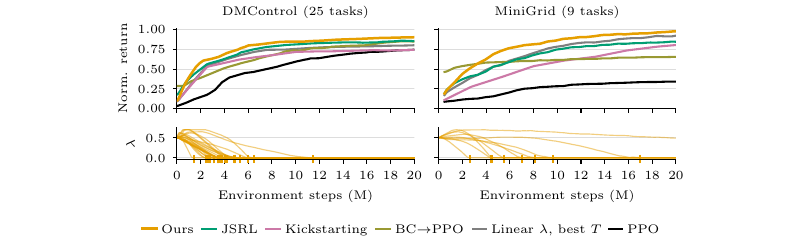}
    \vspace{-6mm}
    \caption{\textbf{Aggregate results on both benchmarks.} Normalized return (min-max per task, averaged over tasks and seeds) and the expert share $\lambda$ per task.}
    \label{fig:aggregate}
\end{figure}

\paragraph{Metrics.}
Asymptotic return is the deterministic return averaged over the final 20\% of training. Steps to bar is the number of environment steps needed to first reach 75\% of the best asymptotic return any method achieved on that task, so the bar is the same for all methods; runs that never reach it count as 20M in medians. In tables, normalized return is the return divided by the best method's return on that task. Aggregate curves normalize each task by its observed range.

\subsection{Comparison of Return Performance}
\label{sec:results}

Tab.~\ref{tab:grouped} summarizes both metrics by task family, and Fig.~\ref{fig:aggregate} shows the aggregate learning curves and the learned expert share $\lambda$; per-task tables, curves, and $\lambda$ trajectories are in App.~\ref{app:results}. On DMControl, our method has the lowest median steps to bar, $4.8$M against $5.6$M for \ac{JSRL}, $6.1$M for \ac{BC}$\rightarrow$\ac{PPO}, $9.1$M for Kickstarting, and $10.1$M for \ac{PPO}, a median per-task speed-up of $2.55\times$ over \ac{PPO}. It also has the highest mean normalized return, and it is the fastest method on 16 of 25 tasks, while no other method is fastest on more than 5 (Tab.~\ref{tab:pertask}). On MiniGrid, reward is sparse, and the methods separate more sharply. \ac{PPO} reaches the bar on only 3 of 9 tasks and ends with zero return on three of the four KeyCorridor variants. \ac{BC}$\rightarrow$\ac{PPO} reaches the bar fastest when cloning covers most of the state space, but it collapses on the two largest KeyCorridor variants and on DoorKey-Random-16x16. Because our method queries the expert on the states the learner itself visits, it avoids this collapse: it obtains the highest final return on 4 tasks and ties for the highest on 3 more (Tab.~\ref{tab:minigrid_pertask}). The scheduled-$\lambda$ arms show why the share has to be learned rather than fixed in advance. No single linear schedule horizon $T$ works across all tasks at the same time. Our method beats every baseline on both metrics in every family except on Finger and steps to bar in MiniGrid, and the $\lambda$ trajectories it learns share no common shape (Fig.~\ref{fig:aggregate}).

\begin{figure}[ht]
    \centering
    \captionsetup{font=small}
    \begin{minipage}[t]{0.46\linewidth}
        \vspace{0pt}\centering
        \captionof{table}{\textbf{Results by task family.} Median steps to bar (M)\,/\,mean normalized return; runs that never reach the bar count as 20M. \textbf{Bold}: best, \underline{underline}: second best. Group definitions and per-task results in App.~\ref{app:results}.}
\label{tab:grouped}
\scriptsize
\setlength{\tabcolsep}{1.5pt}
\begin{tabular}{lccccc}
\toprule
 & \multicolumn{3}{c}{DMControl} & & \\
\cmidrule(lr){2-4}
 & Low-dim. & Finger & Loco. & MiniGrid & All \\
\midrule
\textbf{Ours} & \textbf{1.7}/\textbf{.952} & 7.7/\underline{.981} & \textbf{5.5}/\textbf{.957} & \underline{7.8}/\textbf{.960} & \textbf{5.1}/\textbf{.958} \\
JSRL & \underline{3.3}/.902 & \underline{7.4}/.838 & \underline{5.6}/\underline{.933} & 7.9/.835 & \underline{5.8}/\underline{.889} \\
Kick & 4.2/.820 & 14.8/.744 & 16.0/.781 & 16.7/.775 & 10.1/.789 \\
BC$\rightarrow$PPO & 5.7/.920 & \textbf{5.7}/\textbf{.989} & 9.2/.856 & \textbf{4.3}/.649 & 6.0/.834 \\
$\lambda$1M & \underline{3.3}/\underline{.944} & 8.7/.891 & 7.4/.726 & 8.7/.771 & 7.0/.823 \\
$\lambda$2.5M & 4.6/\underline{.944} & \underline{7.4}/.942 & 7.4/.683 & 9.1/.906 & 6.6/.849 \\
$\lambda$5M & 5.0/\underline{.944} & 10.3/.934 & 10.3/.718 & 12.4/\underline{.907} & 8.7/.860 \\
PPO & 6.0/.807 & 10.1/.899 & 12.6/.730 & 20.0/.336 & 11.9/.665 \\
\bottomrule
\end{tabular}

    \end{minipage}\hfill
    \begin{minipage}[t]{0.52\linewidth}
        \vspace{0pt}\centering
        \includegraphics[width=\linewidth]{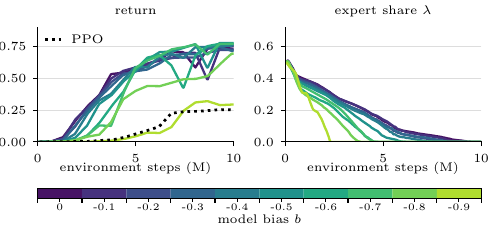}
        \caption{CartpoleSwingupSparse with an \ac{MPC} expert whose model masses are scaled by $(1+b)$. \textbf{Left:} learner return. \textbf{Right:} learned expert share $\lambda$. $b=0$ is the exact model; as the error grows, $\lambda$ reaches zero sooner.}
        \label{fig:ddp_model_bias}
    \end{minipage}
\end{figure}
\vspace{-4mm} 
\subsection{Does \texorpdfstring{$\lambda$}{lambda} track expert quality?}
\label{sec:mpc}

Sect.~\ref{sec:results} varies the expert and the task together, so here we fix the task and vary only the expert's quality, using a different type of expert to also test generality. Following \citet{bhardwaj2021blending}, we use a \ac{MPPI} controller on CartpoleSwingupSparse whose internal model scales the cart and pole masses by $(1+b)$, with $b\in\{0,-0.1,\ldots,-0.9\}$; at $b=0$, the model is exact. The environment itself never changes. Fig.~\ref{fig:ddp_model_bias} shows that the handover adapts to expert quality: $\lambda$ reaches zero at $9.8$M steps for the exact model and at $2.2$M for $b=-0.9$, with the intermediate levels ordered in between. 
No hyperparameters change across the sweep. The learner reaches about 0.7 normalized return at every level of mismatch except $b=-0.9$. There, the expert helps little, the handover withdraws it within $2.2$M steps, and the learner ends still above \ac{PPO}: with a poor expert, the method automatically falls back to \ac{PPO} rather than being held back.

\begin{figure}[h]
    \centering
    \includegraphics[width=0.9\linewidth]{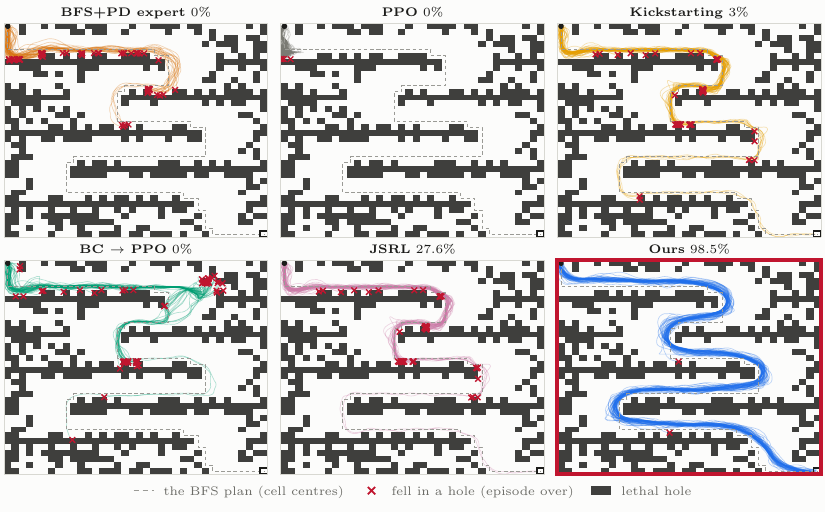}
    \vspace{-4mm}
        \caption{\textbf{Frozen Lake with a failing expert} ($p_{\text{slip}}=0.3$). 60 evaluation episodes per method on the same map, with the success rate above each panel. Only our learner reliably completes the route.}
    \label{fig:frozen-lake}
\end{figure}

\subsection{Can a failing expert help?}
\label{sec:frozenlake}

In the above examples, we make the experts suboptimal on purpose. Here we explore a scenario common in practice: the expert can fail because it plans with an oversimplified model. In our continuous version of the \emph{Frozen Lake Problem}, a point mass must cross a $36\times36$ lake by commanding 2D accelerations. A third of the cells are holes that end the episode, some lined up in rows with alternating gaps that force a serpentine route. Reward comes only at the goal. The ice is slippery: with probability $p_{\text{slip}}=0.3$, the commanded acceleration is rotated by $\pm90^\circ$. The expert uses a breadth-first search PD controller to track the shortest grid path. Because this path grazes the holes and the plan ignores the slip, the expert never reaches the goal (Fig.~\ref{fig:frozen-lake}, full setup in App.~\ref{app:frozenlake}). Exploration is hard in this continuous task, with \ac{PPO} never leaving the start. Our learner keeps the expert's route but learns to avoid the holes. It reaches the goal in 98.5\% of the episodes, against 4\% for Kickstarting, 28\% for \ac{JSRL}, and 0\% for \ac{BC}$\rightarrow$\ac{PPO}. Thus, a failing expert can help if the learner may deviate where it fails.

\subsection{Ablations}
\label{sec:ablations}
\begin{figure}[h]
    \centering
    \begin{subfigure}[t]{0.32\linewidth}
        \centering
        \includegraphics[width=\linewidth]{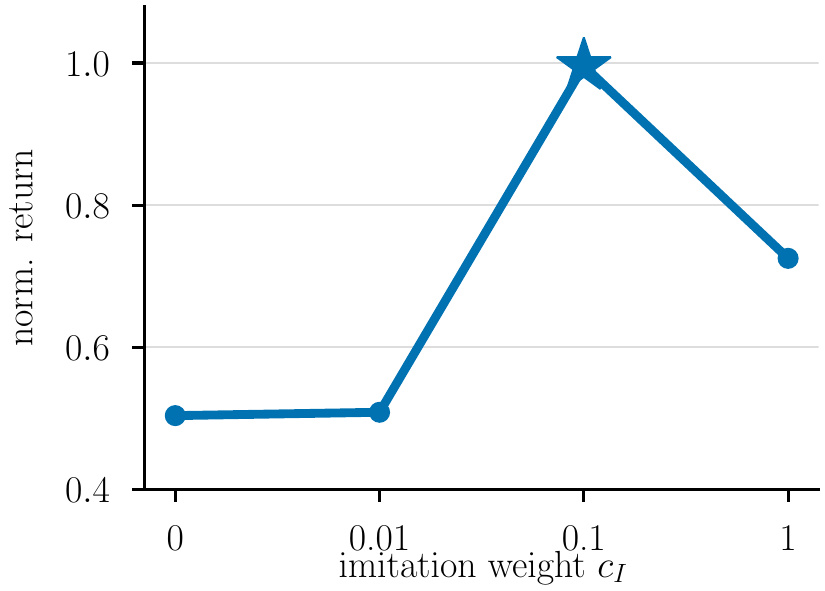}
        \caption{}
        \label{fig:imitation-ablation}
    \end{subfigure}%
    \hfill
    \begin{subfigure}[t]{0.32\linewidth}
        \centering
        \includegraphics[width=\linewidth]{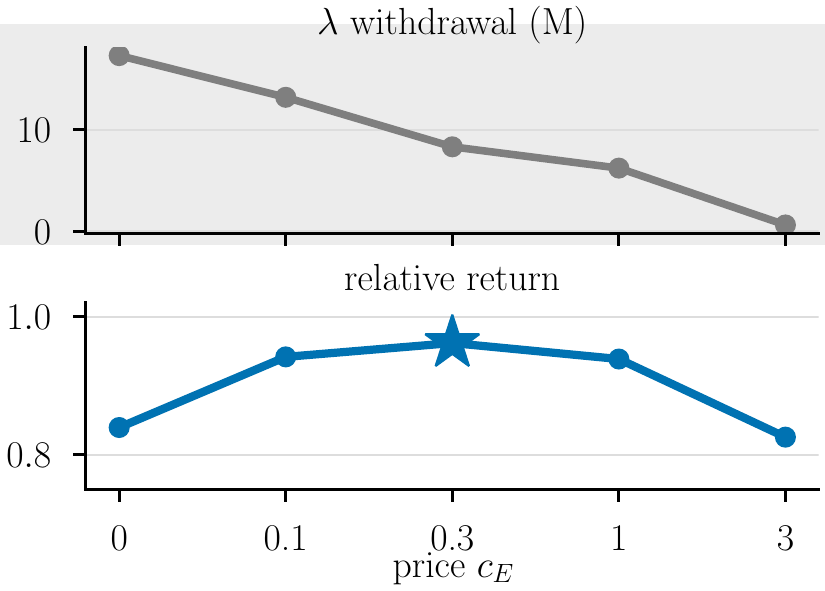}
        \caption{}
        \label{fig:price-ablation}
    \end{subfigure}%
    \hfill
    \begin{subfigure}[t]{0.32\linewidth}
        \centering
        \includegraphics[width=\linewidth]{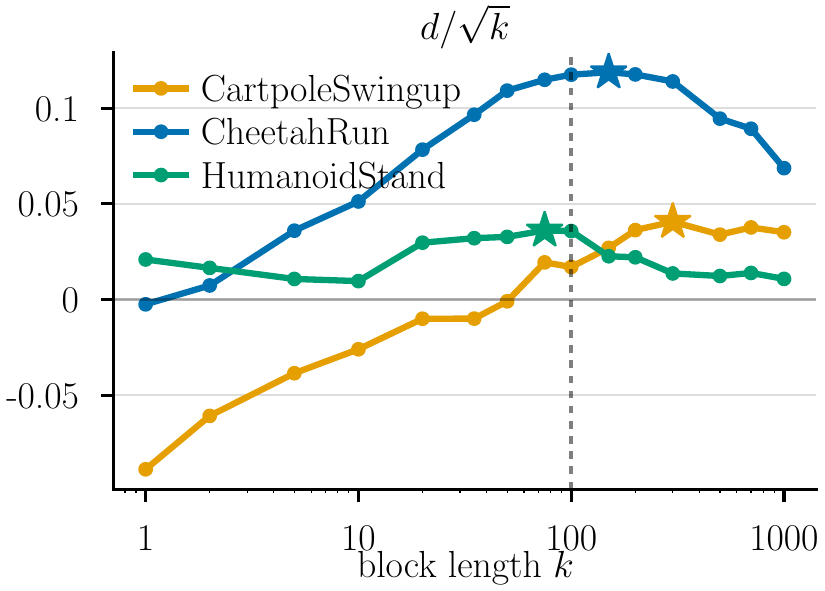}
        \caption{}
        \label{fig:snr-k1000}
    \end{subfigure}
    \vspace{-2mm}
    \caption{\textbf{Hyperparameter ablations.} (a) Imitation weight: normalized return vs.\ $c_I$ (Fig.~\ref{fig:imit-pertask}). (b) Expert price: withdrawal step of $\lambda$ (top) and relative return (bottom) vs.\ $c_E$ (Fig.~\ref{fig:price-pertask}). (c) Block size trade-off: dotted line marks $k{=}100$. Stars mark the maximum. Per-task plots in App.~\ref{app:ablations-pertask}.}
\end{figure}
\vspace{-0.8em}
\paragraph{Imitation regularization.} We sweep $c_I \in \{0, 0.01, 0.1, 1\}$ (Fig.~\ref{fig:imitation-ablation}). The term matters where exploration is hard: for instance, on the Humanoid tasks, removing it leaves the learner at the floor for the entire budget. Excessive imitation instead anchors the learner to the expert and hinders exploration, so $c_I=0.1$ gives the best trade-off across tasks (per-task curves in Fig.~\ref{fig:imit-pertask}).
\paragraph{Block size.}
By Prop.~\ref{prop:contrast}, $\lambda$ moves with the gap $a_E-a_A$. We measure its visibility on a block by the per-block signal-to-noise ratio $d(k):=\E[a_E-a_A]/\sigma$, where $\sigma$ is the standard deviation of the block advantage (untrained learner, ${\lambda=0.5}$, no critic). Maximizing $d$ alone favors long blocks, but a batch of $N$ steps contains $N/k$ blocks, giving gradient signal-to-noise $d(k)\sqrt{N/k}$. Thus, at fixed budget, we maximize $d(k)/\sqrt{k}$ (Fig.~\ref{fig:snr-k1000}). At $k=1$, $d$ is near zero or negative, so single steps provide little information about which policy is better; it then grows with $k$ on every task. In contrast, $d(k)/\sqrt{k}$ peaks at intermediate lengths, $k\approx75$--$300$, with our $k=100$ near the peak.
\paragraph{Price on expert use.} Without the price ($c_E=0$), $\lambda$ stays above 0 on 13 of 25 DMC tasks, even once the learner matches the expert, capping performance at the expert's level. Raising $c_E$ withdraws the expert sooner: mean withdrawal falls from $17.4$M steps at $c_E=0$ to $0.6$M at $c_E=3$. However, too high a price withdraws it before it pays off, so relative return peaks at the default $c_E=0.3$, the value we use on every task without per-task tuning (Fig.~\ref{fig:price-ablation} and~\ref{fig:price-pertask}).

\section{Conclusion}
\label{sec:conclusion}
We presented a policy-gradient method in which a fixed expert and the learner share each episode in blocks, with the expert's share optimized by the same objective as the policy. Committing to blocks reduces the $\lambda$-gradient to the gap between what expert and learner blocks earn. A price on expert use turns withdrawal into part of the objective, and an imitation term on expert blocks lets every block train the learner. Our method reaches a given return in fewer environment steps than guided and unguided baselines, with a median per-task speed-up of $2.55\times$ over \ac{PPO} on DMControl and the largest margins under sparse rewards, even with an expert that never succeeds. The handover needs no per-task tuning, and once $\lambda$ reaches zero the deployed policy runs without the expert. Two directions follow. The comparison of block advantages extends to several experts, each with its own share. And since $\lambda$ measures what the expert is worth, it could budget queries to expensive experts, such as the model-based controllers common in robotics.

\subsection*{AI Use Statement}
We used generative AI tools to polish the writing of the manuscript and as a coding assistant. The authors reviewed all AI-edited text and tested all AI-assisted code, and take full responsibility for the content of this work.

\subsection*{Reproducibility Statement}
Sec.~\ref{sec:method} and App.~\ref{app:contrast} give the derivations and proofs. Sec.~\ref{sec:experiments} and App.~\ref{sec:hyperparameters} describe the setup, hyperparameters, and experts, and App.~\ref{app:frozenlake} describes the FrozenLake task. We will release the code and all experts upon publication.

\bibliography{lib.bib}
\bibliographystyle{preprint}

\appendix

\section{Proof of Proposition~\ref{prop:contrast}}
\label{app:contrast}

\begin{proof}
The switch $z_j$ is drawn independently of the boundary state, so $(s_{b_j},z_j)$ has a product law, and the expectation in~\eqref{eq:lambda-gradient} can be taken in three stages: over the boundary state $s_{b_j}$ under the mixture, then over $z_j\sim\operatorname{Bernoulli}(\lambda)$, which the conditioning leaves unchanged, then over the execution of the block:
\begin{equation}
  \E\Big[\frac{z_j-\lambda}{\lambda(1-\lambda)}\,\widehat A_j\Big]
  =\E_{s_{b_j}}\Big[\,\E_{z_j}\Big[\,
      \E\Big[\frac{z_j-\lambda}{\lambda(1-\lambda)}\,\widehat A_j
        \;\Big|\;s_{b_j},z_j\Big]\Big]\Big].
  \label{eq:nested}
\end{equation}
In the innermost expectation, the score depends only on $z_j$, so it moves outside:
\begin{equation}
  \E\Big[\frac{z_j-\lambda}{\lambda(1-\lambda)}\widehat A_j
    \;\Big|\;s_{b_j}=s,\,z_j\Big]
  =\frac{z_j-\lambda}{\lambda(1-\lambda)}\;
   \E\big[\widehat A_j\mid s_{b_j}=s,z_j\big]
  =\begin{cases}
     \dfrac{a_E(s)}{\lambda}, & z_j=1,\\[8pt]
     \dfrac{-\,a_A(s)}{1-\lambda}, & z_j=0 .
   \end{cases}
  \label{eq:inner}
\end{equation}
Averaging over $z_j$, each outcome's probability cancels the denominator of its own branch:
\begin{equation}
  \lambda\cdot\frac{a_E(s)}{\lambda}
   +(1-\lambda)\cdot\frac{-a_A(s)}{1-\lambda}
  =a_E(s)-a_A(s).
  \label{eq:conditional-contrast}
\end{equation}
Substituting into~\eqref{eq:lambda-gradient} for every block gives~\eqref{eq:lambda-contrast}.
\end{proof}

\section{Additional Results}
\label{app:results}

\paragraph{Task groups.}
Table~\ref{tab:grouped} summarizes the per-task results of Tables~\ref{tab:pertask} and~\ref{tab:minigrid_pertask}. It splits the DMControl tasks into three families:
\emph{Low-dim.\ control} (11 tasks): AcrobotSwingup, AcrobotSwingupSparse, CartpoleBalance, CartpoleBalanceSparse, CartpoleSwingup, CartpoleSwingupSparse, PendulumSwingup, BallInCup, PointMass, ReacherEasy, ReacherHard;
\emph{Finger} (3 tasks): FingerSpin, FingerTurnEasy, FingerTurnHard;
\emph{Locomotion} (11 tasks): CheetahRun, FishSwim, HopperHop, HopperStand, SwimmerSwimmer6, WalkerRun, WalkerStand, WalkerWalk, HumanoidRun, HumanoidStand, HumanoidWalk.
The MiniGrid column covers the 9 tasks of Table~\ref{tab:minigrid_pertask}, and the All column covers all 34 tasks. For each column, we report the median steps to bar over tasks, counting a run that does not reach the bar within 20M steps as 20M. We also report the mean over tasks of the return divided by the best method's return on that task, using the asymptotic return on DMControl and the final return on MiniGrid.

\subsection{DMControl}
\label{app:dmc}

Table~\ref{tab:pertask} reports the steps to bar and the asymptotic return on every DMControl task. Fig.~\ref{fig:per-task-dmc} shows the learning curves, and Fig.~\ref{fig:per-task-lambda-dmc} shows the expert share $\lambda$ of our method over training.

\begin{table*}[ht]
\centering
\caption{\textbf{Sample efficiency and final performance per task on DMC.}
Each cell shows \emph{steps to bar} (M) / \emph{asymptotic return} (median over 5 seeds; ``--'' if not reached within 20M steps).
In the median-steps row, runs that do not reach the bar count as 20M. Normalized return is the return divided by the best method's return on that task, averaged over tasks.
\textbf{Bold}: best. \underline{Underline}: second best. Ties are marked for every tied method and counted for each of them in the last row.}
\label{tab:pertask}
\scriptsize
\setlength{\tabcolsep}{3pt}
\begin{tabular}{lrcccccccc}
\toprule
Task & bar & \textbf{Ours} & JSRL & Kick & BC$\rightarrow$PPO & $\lambda$1M & $\lambda$2.5M & $\lambda$5M & PPO \\
\midrule
AcrobotSwingup & 145 & \textbf{5.8}\,/\,175 & 9.3\,/\,156 & 9.1\,/\,152 & \underline{7.7}\,/\,\underline{192} & 8.7\,/\,185 & 7.8\,/\,\textbf{193} & 8.7\,/\,189 & 14.8\,/\,189 \\
AcrobotSwingupSparse & 41 & \textbf{8.6}\,/\,\textbf{54} & 12.3\,/\,36 & --\,/\,27 & 12.4\,/\,50 & 14.0\,/\,\underline{52} & 14.0\,/\,50 & \underline{11.9}\,/\,49 & --\,/\,19 \\
BallInCup & 730 & \textbf{1.9}\,/\,\textbf{974} & 3.3\,/\,\underline{973} & 4.2\,/\,968 & \underline{2.2}\,/\,971 & 2.9\,/\,\underline{973} & 3.3\,/\,\textbf{974} & 4.6\,/\,\underline{973} & 6.6\,/\,\textbf{974} \\
CartpoleBalance & 665 & \underline{1.0}\,/\,854 & 1.3\,/\,822 & 1.3\,/\,834 & \textbf{0.1}\,/\,\underline{884} & 1.3\,/\,817 & 1.3\,/\,814 & 1.7\,/\,821 & 2.5\,/\,\textbf{886} \\
CartpoleBalanceSparse & 750 & \textbf{1.1}\,/\,987 & \underline{3.3}\,/\,\textbf{1000} & 8.7\,/\,743 & 16.5\,/\,642 & 4.2\,/\,996 & 5.0\,/\,\underline{998} & 5.0\,/\,933 & 6.0\,/\,944 \\
CartpoleSwingup & 566 & \textbf{1.5}\,/\,661 & 6.6\,/\,704 & \underline{2.5}\,/\,572 & 5.7\,/\,\underline{719} & 3.3\,/\,697 & 4.6\,/\,690 & \underline{2.5}\,/\,693 & 4.2\,/\,\textbf{754} \\
CartpoleSwingupSparse & 611 & \textbf{4.1}\,/\,783 & \underline{5.8}\,/\,\textbf{815} & 9.1\,/\,432 & 13.7\,/\,635 & 7.4\,/\,717 & 7.4\,/\,729 & 7.0\,/\,\underline{788} & 15.0\,/\,566 \\
CheetahRun & 636 & 5.4\,/\,\underline{829} & \textbf{3.7}\,/\,\textbf{848} & 7.0\,/\,576 & 6.1\,/\,801 & 7.0\,/\,815 & 5.8\,/\,826 & 9.5\,/\,823 & \underline{5.2}\,/\,816 \\
FingerSpin & 727 & 9.3\,/\,914 & \textbf{5.4}\,/\,\textbf{969} & 11.9\,/\,808 & \underline{5.7}\,/\,\underline{962} & 9.1\,/\,792 & 6.2\,/\,926 & 8.7\,/\,931 & 8.0\,/\,923 \\
FingerTurnEasy & 672 & \underline{7.7}\,/\,\textbf{896} & 18.5\,/\,570 & --\,/\,600 & \textbf{5.7}\,/\,874 & 8.7\,/\,777 & 11.9\,/\,787 & 16.0\,/\,773 & 10.1\,/\,\underline{880} \\
FingerTurnHard & 625 & \textbf{5.7}\,/\,\textbf{833} & 7.4\,/\,730 & 14.8\,/\,606 & 7.1\,/\,\underline{831} & \underline{6.2}\,/\,824 & 7.4\,/\,827 & 10.3\,/\,815 & 11.1\,/\,635 \\
FishSwim & 429 & \textbf{4.6}\,/\,\underline{567} & \underline{7.4}\,/\,\textbf{572} & 16.0\,/\,444 & 7.9\,/\,552 & \underline{7.4}\,/\,558 & \underline{7.4}\,/\,550 & 10.3\,/\,544 & 12.6\,/\,471 \\
HopperHop & 283 & \underline{7.2}\,/\,\textbf{378} & \textbf{5.6}\,/\,\underline{374} & 18.5\,/\,276 & --\,/\,137 & 14.8\,/\,262 & 17.8\,/\,288 & 15.2\,/\,304 & 19.7\,/\,134 \\
HopperStand & 647 & \textbf{2.0}\,/\,\underline{778} & 5.0\,/\,768 & 3.3\,/\,\textbf{863} & 9.2\,/\,763 & 5.8\,/\,745 & \underline{2.9}\,/\,766 & 5.8\,/\,770 & --\,/\,436 \\
HumanoidRun & 210 & \underline{7.7}\,/\,\underline{238} & \textbf{5.4}\,/\,\textbf{280} & --\,/\,198 & 14.1\,/\,221 & --\,/\,76 & --\,/\,7 & --\,/\,62 & --\,/\,144 \\
HumanoidStand & 745 & \textbf{5.5}\,/\,\textbf{993} & \underline{5.7}\,/\,754 & 11.1\,/\,892 & 10.6\,/\,\underline{951} & --\,/\,36 & --\,/\,35 & --\,/\,35 & 11.9\,/\,936 \\
HumanoidWalk & 712 & \underline{14.4}\,/\,\underline{768} & \textbf{9.1}\,/\,\textbf{950} & --\,/\,606 & --\,/\,663 & --\,/\,213 & --\,/\,18 & --\,/\,238 & --\,/\,162 \\
PendulumSwingup & 629 & \textbf{4.8}\,/\,828 & 6.2\,/\,698 & \underline{5.0}\,/\,774 & 5.9\,/\,\underline{832} & 7.0\,/\,\textbf{839} & 5.8\,/\,829 & 7.4\,/\,\textbf{839} & --\,/\,63 \\
PointMass & 655 & \textbf{0.1}\,/\,748 & 2.5\,/\,743 & \underline{0.9}\,/\,\textbf{873} & 3.0\,/\,740 & 2.5\,/\,733 & 2.9\,/\,739 & 5.8\,/\,741 & 3.3\,/\,\underline{751} \\
ReacherEasy & 716 & 1.7\,/\,899 & 2.5\,/\,881 & 2.5\,/\,\underline{910} & \textbf{0.6}\,/\,\textbf{955} & 2.5\,/\,864 & 2.5\,/\,872 & 2.5\,/\,865 & \underline{1.3}\,/\,\textbf{955} \\
ReacherHard & 704 & \textbf{1.3}\,/\,926 & 3.3\,/\,922 & 2.5\,/\,836 & \underline{1.6}\,/\,\underline{932} & 2.5\,/\,930 & 4.2\,/\,925 & 4.6\,/\,\textbf{938} & 3.3\,/\,909 \\
SwimmerSwimmer6 & 379 & \textbf{6.9}\,/\,\textbf{505} & \underline{8.7}\,/\,326 & --\,/\,292 & 11.6\,/\,434 & 10.7\,/\,\underline{491} & 11.9\,/\,447 & 16.0\,/\,415 & 16.0\,/\,403 \\
WalkerRun & 524 & 6.0\,/\,\textbf{698} & 5.8\,/\,691 & --\,/\,480 & \textbf{4.0}\,/\,674 & \underline{5.0}\,/\,\underline{696} & 7.0\,/\,679 & 9.1\,/\,667 & 7.2\,/\,672 \\
WalkerStand & 732 & \textbf{1.7}\,/\,\textbf{977} & 2.5\,/\,973 & 2.5\,/\,923 & \underline{2.2}\,/\,973 & 2.5\,/\,973 & 5.4\,/\,\underline{974} & 7.0\,/\,973 & 4.6\,/\,973 \\
WalkerWalk & 721 & \textbf{1.5}\,/\,\underline{961} & \underline{2.5}\,/\,959 & 4.2\,/\,916 & 3.0\,/\,957 & 2.9\,/\,959 & 6.2\,/\,957 & 7.8\,/\,957 & 4.4\,/\,\textbf{962} \\
\midrule
\multicolumn{2}{l}{Median steps / norm.\ return} & \textbf{4.8}\,/\,\textbf{0.957} & \underline{5.6}\,/\,\underline{0.908} & 9.1\,/\,0.794 & 6.1\,/\,0.901 & 7.0\,/\,0.841 & 6.2\,/\,0.829 & 8.7\,/\,0.843 & 10.1\,/\,0.784 \\
\multicolumn{2}{l}{\# fastest / \# best return} & \textbf{16}\,/\,\textbf{9} & \underline{5}\,/\,\underline{7} & 0\,/\,2 & 4\,/\,1 & 0\,/\,1 & 0\,/\,2 & 0\,/\,2 & 0\,/\,5 \\
\bottomrule
\end{tabular}
\end{table*}

\begin{figure}[p]
    \centering
    \includegraphics[width=\linewidth]{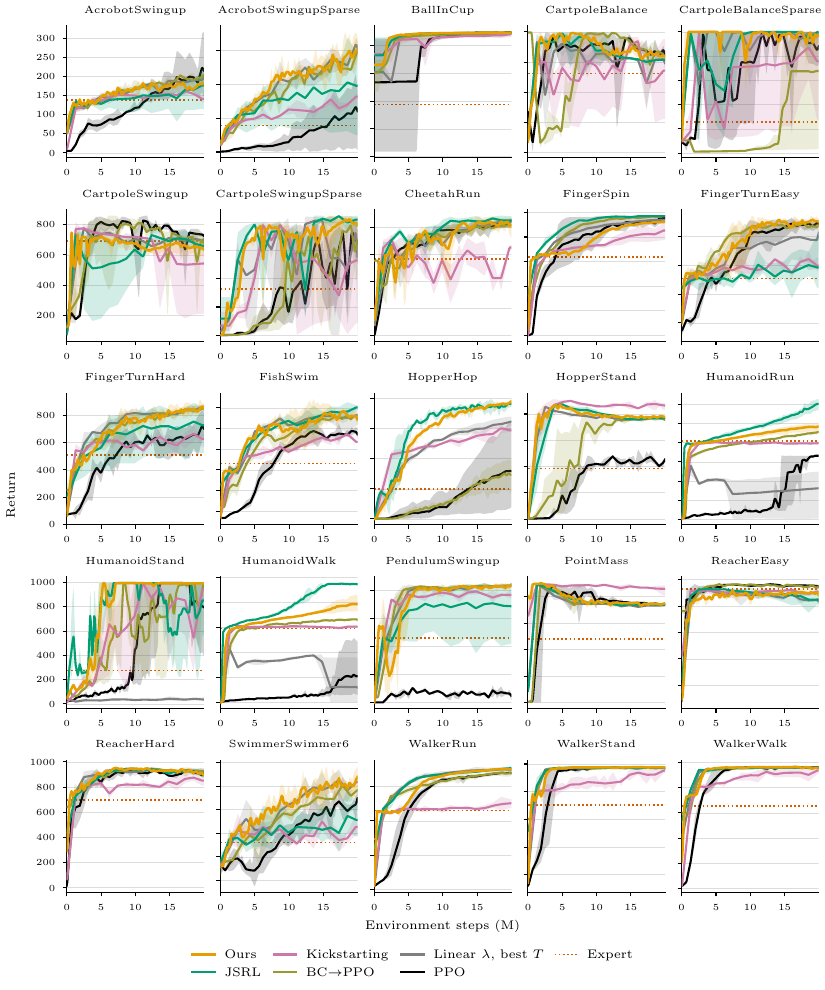}
    \caption{\textbf{DMControl learning curves per task.} Return of the learner acting alone with deterministic actions, for each method.}
    \label{fig:per-task-dmc}
\end{figure}

\begin{figure}[p]
    \centering
    \includegraphics[width=\linewidth]{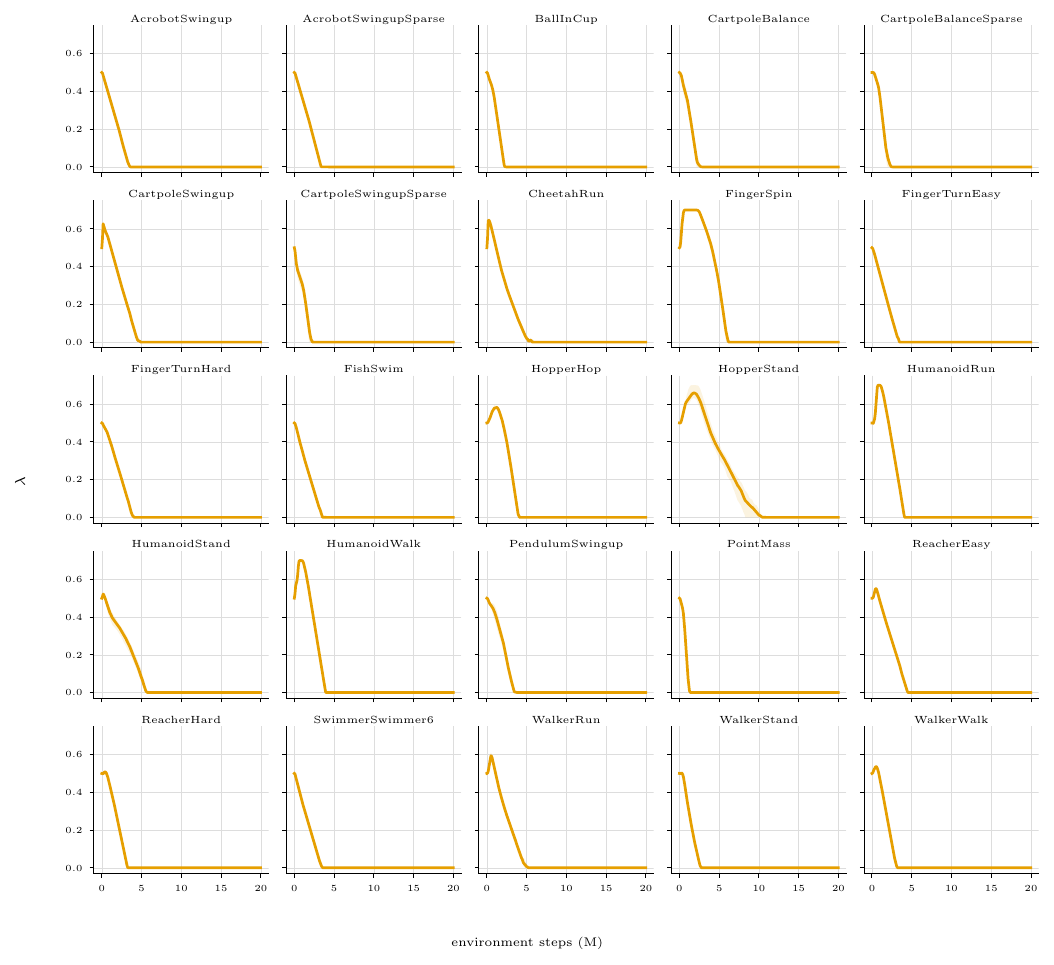}
    \caption{\textbf{DMControl expert share per task.} Expert share $\lambda$ of our method over training.}
    \label{fig:per-task-lambda-dmc}
\end{figure}

\subsection{MiniGrid}
\label{app:minigrid}

Table~\ref{tab:minigrid_pertask} reports the steps to bar and the final return on every MiniGrid task. Fig.~\ref{fig:per-task-mg} shows the learning curves, with the expert's return as a dotted line, and Fig.~\ref{fig:per-task-lambda-mg} shows the expert share $\lambda$.

\begin{table*}[ht]
\centering
\caption{\textbf{Sample efficiency and final performance per task on MiniGrid.}
Each cell shows \emph{steps to bar} / \emph{final return}.
Steps to bar is the number of environment steps (in millions) needed to reach 75\% of the best final return on that task (median over 5 seeds; ``--'' if not reached within 20M steps).
Final return is the deterministic return after 20M steps (mean over 5 seeds).
In the median-steps row, runs that do not reach the bar count as 20M. Normalized return is the final return divided by the best method's final return on that task, averaged over tasks.
\textbf{Bold}: best. \underline{Underline}: second best. Ties are marked for every tied method and counted for each of them in the last row.}
\label{tab:minigrid_pertask}
\scriptsize
\setlength{\tabcolsep}{3pt}
\begin{tabular}{lrcccccccc}
\toprule
Task & bar & \textbf{Ours} & JSRL & Kick & BC$\rightarrow$PPO & $\lambda$1M & $\lambda$2.5M & $\lambda$5M & PPO \\
\midrule
KeyCorridorS4R3 & 0.75 & 4.7\,/\,0.98 & 5.9\,/\,\underline{0.99} & 16.7\,/\,0.75 & \textbf{2.2}\,/\,0.94 & \underline{4.2}\,/\,0.92 & 4.9\,/\,\textbf{1.00} & 7.4\,/\,\underline{0.99} & --\,/\,0.00 \\
KeyCorridorS5R3 & 0.72 & \underline{8.5}\,/\,\textbf{0.96} & \textbf{7.9}\,/\,0.92 & 17.9\,/\,0.69 & --\,/\,0.00 & 9.1\,/\,0.77 & 9.1\,/\,\underline{0.94} & 12.4\,/\,0.90 & --\,/\,0.00 \\
KeyCorridorS6R3 & 0.67 & \textbf{7.8}\,/\,\textbf{0.90} & \underline{12.3}\,/\,\underline{0.77} & 15.7\,/\,0.75 & --\,/\,0.00 & --\,/\,0.00 & --\,/\,0.46 & 14.8\,/\,\underline{0.77} & --\,/\,0.00 \\
DoorKey-16x16 & 0.70 & \textbf{8.5}\,/\,\textbf{0.93} & 10.5\,/\,0.79 & 16.7\,/\,0.69 & \underline{8.6}\,/\,0.81 & 9.0\,/\,\underline{0.90} & 9.6\,/\,\underline{0.90} & 12.5\,/\,0.77 & --\,/\,0.17 \\
DoorKey-Random-16x16 & 0.52 & \textbf{10.3}\,/\,\textbf{0.69} & --\,/\,0.15 & --\,/\,0.34 & --\,/\,0.09 & --\,/\,0.21 & 15.8\,/\,0.49 & \underline{14.5}\,/\,\underline{0.61} & --\,/\,0.03 \\
LavaCrossingS11N5 & 0.48 & --\,/\,0.42 & 12.5\,/\,0.42 & --\,/\,0.30 & \textbf{4.3}\,/\,\underline{0.58} & \underline{8.7}\,/\,0.56 & 10.9\,/\,\textbf{0.64} & 18.9\,/\,0.45 & --\,/\,0.06 \\
FourRooms-13x13 & 0.75 & \underline{0.7}\,/\,\textbf{1.00} & \underline{0.7}\,/\,\textbf{1.00} & 1.9\,/\,\textbf{1.00} & \textbf{0.5}\,/\,\textbf{1.00} & \underline{0.7}\,/\,\textbf{1.00} & \underline{0.7}\,/\,\textbf{1.00} & \underline{0.7}\,/\,\textbf{1.00} & 8.2\,/\,\textbf{1.00} \\
KeyCorridorS3R3 & 0.75 & 2.1\,/\,\textbf{1.00} & \underline{1.9}\,/\,\textbf{1.00} & 6.7\,/\,\underline{0.98} & \textbf{0.5}\,/\,\textbf{1.00} & 3.0\,/\,\textbf{1.00} & 3.8\,/\,\textbf{1.00} & 4.6\,/\,\textbf{1.00} & 6.9\,/\,0.77 \\
FourRooms-17x17 & 0.75 & \underline{1.0}\,/\,\textbf{1.00} & 1.2\,/\,\underline{0.99} & 2.6\,/\,\textbf{1.00} & \textbf{0.5}\,/\,\textbf{1.00} & 1.5\,/\,\textbf{1.00} & 1.6\,/\,\textbf{1.00} & \underline{1.0}\,/\,\textbf{1.00} & 11.9\,/\,0.93 \\
\midrule
\multicolumn{2}{l}{Median steps / norm.\ return} & \underline{7.8}\,/\,\textbf{0.960} & 7.9\,/\,0.835 & 16.7\,/\,0.775 & \textbf{4.3}\,/\,0.649 & 8.7\,/\,0.771 & 9.1\,/\,0.906 & 12.4\,/\,\underline{0.907} & 20.0\,/\,0.336 \\
\multicolumn{2}{l}{\# fastest / \# best return} & \underline{3}\,/\,\textbf{7} & 1\,/\,2 & 0\,/\,2 & \textbf{5}\,/\,3 & 0\,/\,3 & 0\,/\,\underline{5} & 0\,/\,3 & 0\,/\,1 \\
\bottomrule
\end{tabular}
\end{table*}

\begin{figure}[p]
    \centering
    \includegraphics[width=\linewidth]{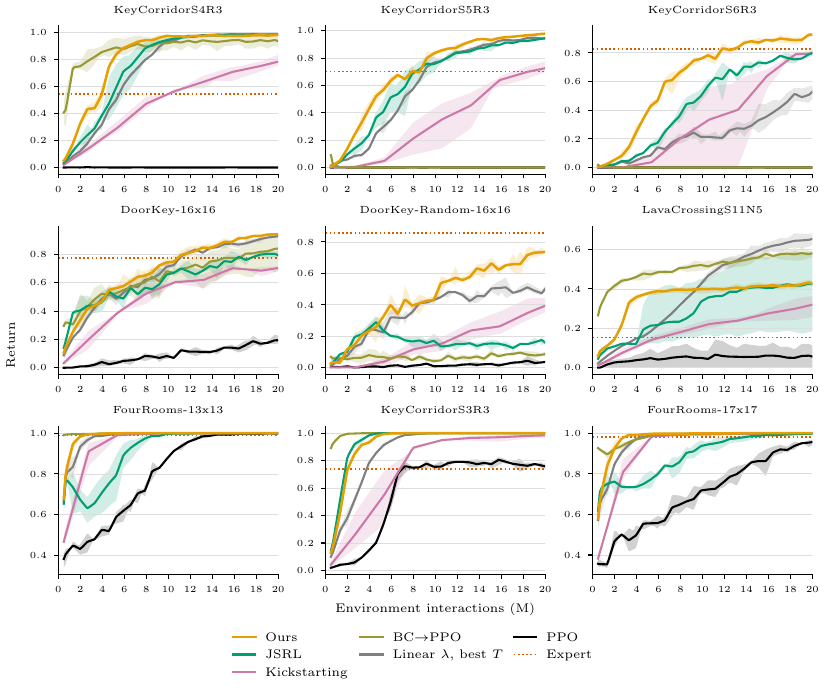}
    \caption{\textbf{MiniGrid learning curves per task.} Return of the learner acting alone with deterministic actions, for each method. The dotted line is the expert's return.}
    \label{fig:per-task-mg}
\end{figure}

\begin{figure}[p]
    \centering
    \includegraphics[width=\linewidth]{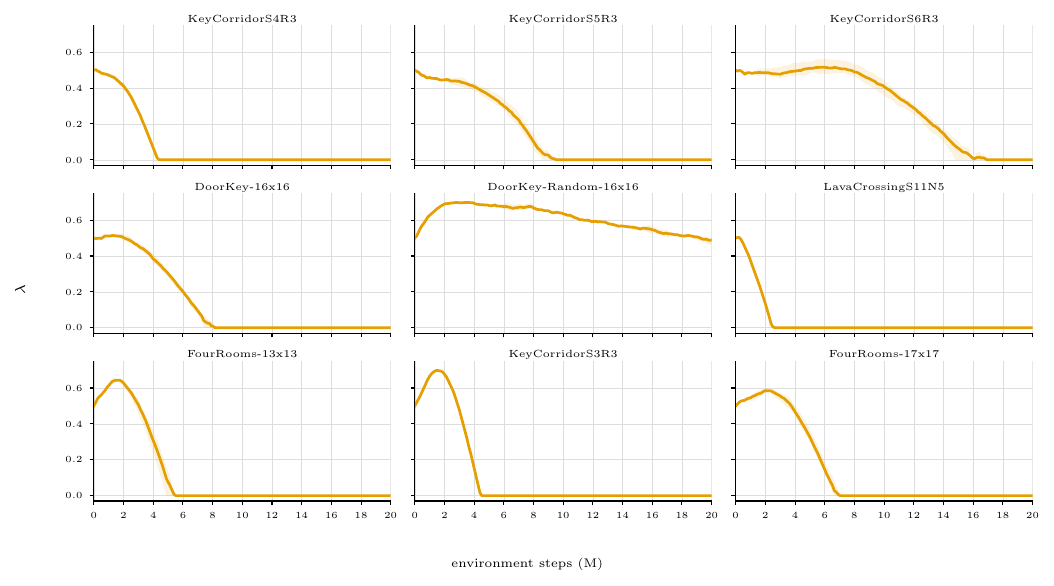}
    \caption{\textbf{MiniGrid expert share per task.} Expert share $\lambda$ of our method over training.}
    \label{fig:per-task-lambda-mg}
\end{figure}

\clearpage
\subsection{FrozenLake}
\label{app:frozenlake}

\paragraph{Task.}
A point mass moves on a continuous $36\times36$ grid in which $32.9\%$ of the cells are holes. The action is a two-dimensional acceleration. Entering a hole ends the episode. The reward is $+1$ on the step that enters the goal and $0$ otherwise. With probability $p_{\text{slip}}$, the commanded acceleration is rotated by a quarter turn. Five full-width walls with alternating doorways force a serpentine route of $123$ cells with $25$ turns. Episodes last at most $700$ steps.

\paragraph{Expert.}
Breadth-first search over the free cells returns a path, and a PD controller tracks it. The expert is suboptimal by construction: the shortest path passes close to the corners of holes, and the planner has no model of the slip. It never reaches the goal (Fig.~\ref{fig:frozen-lake}).

\paragraph{Protocol.}
All methods share the environment, the expert, the network, the \ac{PPO} hyperparameters, and the budget of $2\times10^{7}$ transitions. They differ only in how they use the expert. We report the success rate of the learner acting alone with deterministic actions.

\paragraph{Result.}
Table~\ref{tab:flake_seeds} and Fig.~\ref{fig:frozen-lake} give the results at $p_{\text{slip}}=0.3$. \ac{PPO} never reaches the goal, so its advantages are zero for the whole run and the reward gives it no learning signal. \ac{BC}$\rightarrow$\ac{PPO} follows a route that ends in a hole. \ac{JSRL} and Kickstarting get further along the route than the expert, but they usually fall into a hole before the goal. Our learner reaches the goal in $98.5\%$ of the episodes.

\begin{table}[h]
\centering
\caption{\textbf{Final success rate on FrozenLake} ($p_{\text{slip}}=0.3$), mean over 5 seeds.}
\label{tab:flake_seeds}
\begin{tabular}{lc}
\toprule
Method & Success rate \\
\midrule
\ac{PPO}                   & 0.000 \\
\ac{BC}$\rightarrow$\ac{PPO} & 0.000 \\
Kickstarting               & 0.037 \\
\ac{JSRL}                  & 0.276 \\
Ours                       & \textbf{0.985} \\
\bottomrule
\end{tabular}
\end{table}

\clearpage
\subsection{Ablations per task}
\label{app:ablations-pertask}

Fig.~\ref{fig:imit-pertask} and Fig.~\ref{fig:price-pertask} show per-task runs for the imitation-weight and expert-price ablations of Sec.~\ref{sec:ablations} (Figs.~\ref{fig:imitation-ablation} and~\ref{fig:price-ablation}).

\begin{figure}[h]
    \centering
    \includegraphics[width=\linewidth]{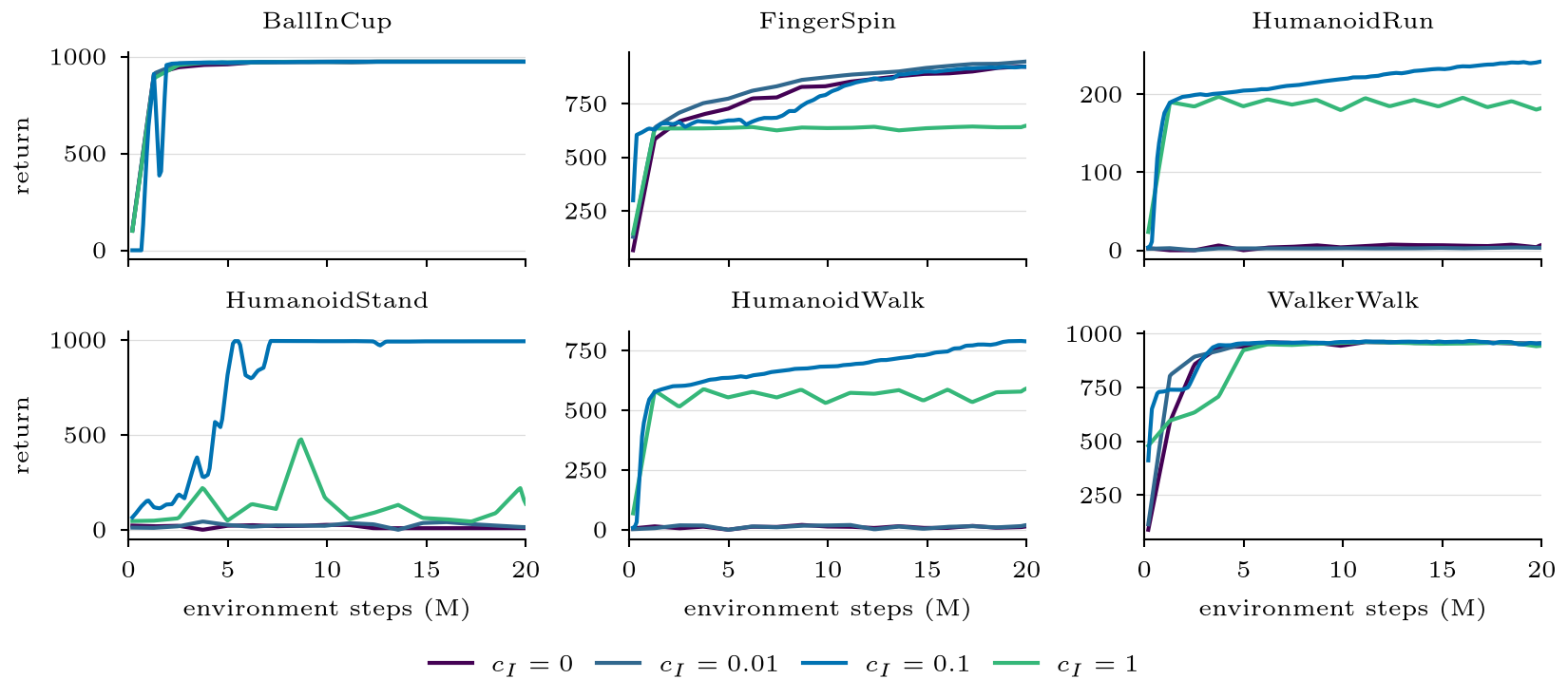}
    \caption{\textbf{Imitation weight per task.} Return over training on six DMControl tasks for each $c_I$. With $c_I\in\{0,0.01\}$, the learner stays near zero return on the three Humanoid tasks. With $c_I=1$, it stays near the expert's return on FingerSpin and ends lower on the Humanoid tasks. On BallInCup and WalkerWalk, $c_I$ has little effect.}
    \label{fig:imit-pertask}
\end{figure}

\begin{figure}[h]
    \centering
    \includegraphics[width=\linewidth]{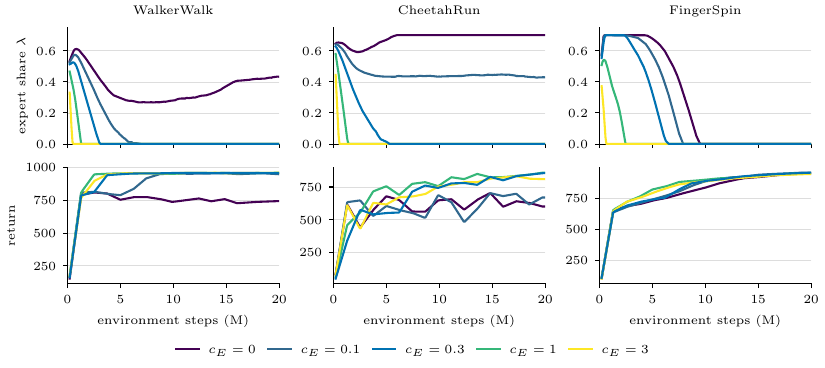}
    \caption{\textbf{Expert price per task.} Expert share $\lambda$ (top) and return (bottom) over training on three DMControl tasks for each $c_E$. With $c_E=0$, $\lambda$ stays above zero until the end of training on WalkerWalk and CheetahRun, and reaches zero on FingerSpin after about 10M steps. With $c_E=3$, $\lambda$ reaches zero within 1M steps on all three tasks.}
    \label{fig:price-pertask}
\end{figure}

\section{Hyperparameters}
\label{sec:hyperparameters}

Table \ref{tab:hyperparams} reports the hyperparameters used for DMControl, while Table \ref{tab:hyperparams_minigrid} reports only the differences with respect to DMControl. All baselines use the same network architecture, with two hidden layers of 256 units for both the actor and critic, and $\tanh$ activations.

\begin{table}[ht]
\centering
\scriptsize
\setlength{\tabcolsep}{3pt}
\caption{\textbf{Hyperparameters.} The PPO backbone is identical for all methods; only the guidance mechanism differs. All methods use the same frozen guide $\pi_E$ with no per-task tuning.}
\label{tab:hyperparams}
\begin{tabular}{@{}ll@{}}
\toprule
\textbf{Parameter} & \textbf{Value} \\
\midrule
\multicolumn{2}{@{}l}{\textit{Shared PPO backbone}} \\
$N$ parallel environments & 2048 \\
Rollout length $T$ & 30 \\
Minibatches / epochs & 32 / 16 \\
$\gamma$ / $\lambda_{\mathrm{GAE}}$ & 0.995 / 0.95 \\
Clip $\epsilon$ & 0.2 \\
Value / entropy coefficient & 0.5 / 0.01 \\
Max gradient norm & 0.5 \\
Activation & tanh \\
Observation normalisation & yes \\
Environment steps & $2\times10^7$ \\
Evaluation & 128 $\times$ 1000, deterministic \\
\midrule
\multicolumn{2}{@{}l}{\textit{Ours}} \\
Block length $k$ & 100 \\
$\lambda_0$ / $\lambda_{\max}$ & 0.5 / 0.7 \\
$\eta_\lambda$ / $c_E$ / $c_I$ & 0.03 / 0.3 / 0.1 \\
\midrule
\multicolumn{2}{@{}l}{\textit{JSRL}} \\
Tolerance $\tau$ / stages & 0.2 / 10 \\
Guide horizon / MA window & 1000 / 0.3 \\
BC coefficient & 0.1 \\
\midrule
\multicolumn{2}{@{}l}{\textit{BC$\rightarrow$PPO}} \\
Transitions / epochs & $5\times10^5$ / 60 \\
Learning rate / batch size & $3\times10^{-4}$ / 4096 \\
\midrule
\multicolumn{2}{@{}l}{\textit{Kickstarting}} \\
Distillation coefficient & 1.0 \\
Anneal fraction / teacher samples & 0.5 / 8 \\
PBT population & 4 \\
\midrule
\multicolumn{2}{@{}l}{\textit{Linear $\lambda$}} \\
Schedule & $1\!\rightarrow\!0$ \\
Decay horizon & 1M (also 2.5M, 5M) \\
Imitation weight $c_I$ & 0.1 \\
\bottomrule
\end{tabular}
\end{table}

\begin{table}[ht]
\centering
\scriptsize
\setlength{\tabcolsep}{3pt}
\caption{\textbf{MiniGrid hyperparameters.} Only differences from the DMControl settings in Table~\ref{tab:hyperparams} are reported.}
\label{tab:hyperparams_minigrid}
\begin{tabular}{@{}ll@{}}
\toprule
\textbf{Parameter} & \textbf{Value} \\
\midrule
\multicolumn{2}{@{}l}{\textit{Shared PPO backbone}} \\
$N$ parallel environments & 512 \\
Rollout length $T$ & 64 \\
Minibatches / epochs & 8 / 2 \\
Learning rate & $10^{-3}$ \\
Observation normalisation & no \\
\midrule
\multicolumn{2}{@{}l}{\textit{Ours}} \\
Block length $k$ & 30 \\
Imitation weight $c_I$ & 1.0 \\
\midrule
\multicolumn{2}{@{}l}{\textit{JSRL}} \\
Tolerance $\tau$ & 0.05 \\
BC coefficient & 1.0 \\
\midrule
\multicolumn{2}{@{}l}{\textit{Linear $\lambda$}} \\
Imitation weight $c_I$ & 1.0 \\
\bottomrule
\end{tabular}
\end{table}

\section{Guides}
\label{app:guides}
Tables \ref{tab:experts} and \ref{tab:experts_minigrid} provide the reward achieved by each guide.
\begin{table}[t]
\centering
\small
\caption{Guide quality on DMControl. The ceiling is the best asymptotic return of any method on that task (Table~\ref{tab:pertask}). Guides are deliberately sub-optimal (mean $0.60\times$ the ceiling, range $0.26$--$0.97$).}
\label{tab:experts}
\begin{tabular}{@{}lrrr@{}}
\toprule
Task & Guide $\pi_E$ & ceiling & $\pi_E/\text{ceil}$ \\
\midrule
AcrobotSwingup & 139.1 & 193 & 0.72 \\
AcrobotSwingupSparse & 15.7 & 54 & 0.29 \\
BallInCup & 844.0 & 974 & 0.87 \\
CartpoleBalance & 724.9 & 886 & 0.82 \\
CartpoleBalanceSparse & 256.2 & 1000 & 0.26 \\
CartpoleSwingup & 693.1 & 754 & 0.92 \\
CartpoleSwingupSparse & 328.9 & 815 & 0.40 \\
CheetahRun & 565.0 & 848 & 0.67 \\
FingerSpin & 635.2 & 969 & 0.66 \\
FingerTurnEasy & 509.1 & 896 & 0.57 \\
FingerTurnHard & 510.5 & 833 & 0.61 \\
FishSwim & 330.3 & 572 & 0.58 \\
HopperHop & 100.4 & 378 & 0.27 \\
HopperStand & 386.1 & 863 & 0.45 \\
HumanoidRun & 203.7 & 280 & 0.73 \\
HumanoidStand & 273.4 & 993 & 0.28 \\
HumanoidWalk & 594.8 & 950 & 0.63 \\
PendulumSwingup & 460.0 & 839 & 0.55 \\
PointMass & 482.0 & 873 & 0.55 \\
ReacherEasy & 927.4 & 955 & 0.97 \\
ReacherHard & 694.8 & 938 & 0.74 \\
SwimmerSwimmer6 & 261.1 & 505 & 0.52 \\
WalkerRun & 461.0 & 698 & 0.66 \\
WalkerStand & 705.1 & 977 & 0.72 \\
WalkerWalk & 654.1 & 962 & 0.68 \\
\midrule
\textit{mean} & & & 0.60 \\
\bottomrule
\end{tabular}
\end{table}
\begin{table}[t]
\centering
\small
\caption{Guide quality on MiniGrid. Guides are deliberately sub-optimal (mean $0.73\times$ the attainable performance).}
\label{tab:experts_minigrid}
\begin{tabular}{@{}lrrr@{}}
\toprule
Task & Guide $\pi_E$ & ceiling & $\pi_E/\text{ceil}$ \\
\midrule
KeyCorridorS4R3 & 0.546 & 1.000 & 0.55 \\
KeyCorridorS5R3 & 0.703 & 1.000 & 0.70 \\
KeyCorridorS6R3 & 0.825 & 1.000 & 0.83 \\
DoorKey-16x16 & 0.776 & 1.000 & 0.78 \\
DoorKey-Random-16x16 & 0.857 & 1.000 & 0.86 \\
LavaCrossingS11N5 & 0.154 & 1.000 & 0.15 \\
FourRooms-13x13 & 0.989 & 1.000 & 0.99 \\
KeyCorridorS3R3 & 0.740 & 1.000 & 0.74 \\
FourRooms-17x17 & 0.981 & 1.000 & 0.98 \\
\midrule
\textit{mean} & & & 0.73 \\
\bottomrule
\end{tabular}
\end{table}

\end{document}